\def\figref#1{figure~\ref{#1}}
\def\Figref#1{Figure~\ref{#1}}
\def\twofigref#1#2{figures \ref{#1} and \ref{#2}}
\def\trifigref#1#2#3{figures \ref{#1}, \ref{#2}, and \ref{#3}}
\def\secref#1{section~\ref{#1}}
\def\eqref#1{equation~\ref{#1}}
\providecommand{\defref}[1]{definition~\ref{#1}}
\providecommand{\thmref}[1]{theorem~\ref{#1}}
\providecommand{\Thmref}[1]{Theorem~\ref{#1}}
\providecommand{\lemref}[1]{lemma~\ref{#1}}
\providecommand{\Lemref}[1]{Lemma~\ref{#1}}
\providecommand{\corref}[1]{corollary~\ref{#1}}
\providecommand{\exampleref}[1]{example~\ref{#1}}
\providecommand{\remref}[1]{remark~\ref{#1}}
\def\1{\bm{1}}
\def\eps{{\epsilon}}
\DeclareMathAlphabet{\mathsfit}{\encodingdefault}{\sfdefault}{m}{sl}
\SetMathAlphabet{\mathsfit}{bold}{\encodingdefault}{\sfdefault}{bx}{n}
\def\gN{{\mathcal{N}}}
\def\sN{{\mathbb{N}}}
\def\sP{{\mathbb{P}}}
\def\sR{{\mathbb{R}}}
\def\sZ{{\mathbb{Z}}}
\providecommand{\Or}{\bar{r}}
\providecommand{\oX}{\bar{X}}
\providecommand{\oY}{\bar{Y}}
\newcommand{\E}[1]{\mathbb{E} \left[ #1 \right]}
\newcommand{\CE}[2]{\mathbb{E} \left[ #1 \middle| #2 \right]}
\providecommand{\inner}[2]{ \langle #1, #2 \rangle}  
\newcommand{\po }[1]{ \ensuremath{\lv #1 \rv_{\psi_1}} }  
\providecommand{\pt }[1]{ \ensuremath{\lv #1 \rv_{\psi_2}} }  
\providecommand{\lt }[1]{ \ensuremath{\lv #1 \rv} }  
\providecommand{\norm}[1]{\ensuremath{\left\| #1 \right\|}}
\newcommand{\biggg}{\bBigg@{3}}
\newcommand{\vast}{\bBigg@{4}}
\newcommand{\Vast}{\bBigg@{5}}
\providecommand{\brho}{\bar{\rho}}
\renewcommand{\varrho}{\rho}
\newtheorem{theorem}{Theorem}
\newtheorem{lemma}{Lemma}
\newtheorem{corollary}{Corollary}
\theoremstyle{definition}
\newtheorem{definition}{Definition}
\newtheorem{example}{Example}
\theoremstyle{definition}
\newtheorem{remark}{Remark}
\newtheorem*{remark*}{Remark}
\newcommand{\RR}{\mathbb{R}}
  \newcommand{\rv}{\right\Vert}
 \newcommand{\lv}{\left\Vert}
\DeclareMathOperator*{\Var}{Var}
\DeclareMathOperator{\ReLU}{R}
\newcommand{\R}{\widehat{{\text{R}}}}
\newcommand{\dualactthree}{\widehat{\sigma}}
\newcommand{\dualactone}{\widehat{\sigma}}
\newcommand{\Rone}{\widehat{\text{R}}}
\newcommand{\IN}{\mathrm{in}}
\newcommand{\OUT}{\mathrm{out}}
\providecommand{\IN}[1]{{\color{magenta} IN: #1}}
\title{A Johnson--Lindenstrauss Framework for \\ Randomly Initialized CNNs}
\author{Ido Nachum, Jan Hązła, Michael Gastpar
    \\ School of Computer and Communication Sciences
    \\  \'Ecole Polytechnique F\'ed\'erale de Lausanne 
    \\ 1015 Lausanne, Switzerland 
    \\ \texttt{$\langle$forename.surname$\rangle$@epfl.ch} 
\And
    Anatoly Khina
    \\ School of Electrical Engineering
    \\ Tel Aviv University
    \\ Tel Aviv 6997801, Israel
    \\ \texttt{anatolyk@eng.tau.ac.il} 
} 
\begin{document}

\maketitle


\begin{abstract}

    How does the geometric representation of a dataset change after the application of each randomly initialized layer of a neural network?
       The celebrated Johnson--Lindenstrauss lemma answers this question for linear fully-connected neural networks (FNNs), stating that 
      the geometry is essentially preserved.
      For FNNs with the ReLU activation, the angle between two inputs contracts according to a known mapping.
      The question for non-linear convolutional neural networks (CNNs) becomes much more intricate. 
      To answer this question, we introduce a geometric framework.
      For linear CNNs, we show that the Johnson--Lindenstrauss lemma continues to hold, namely, that the angle between two inputs is preserved. 
      For CNNs with ReLU activation, on the other hand, the behavior is richer:
      The angle between the outputs contracts, where the level of contraction  depends on the nature of the inputs.
      In particular, after one layer, the geometry of natural images is essentially preserved, 
      whereas for Gaussian correlated inputs, CNNs exhibit the same contracting behavior as FNNs with ReLU activation. 
\end{abstract}


\section{Introduction}

Neural networks have become a standard tool in multiple scientific fields, due to their success in classification (and estimation) tasks.
Conceptually, this success is achieved since better representation is allowed by each subsequent layer until linear separability is achieved at the last (linear) layer.
Indeed, in many disciplines involving real-world tasks, such as computer vision and natural language processing, the training process is biased toward these favorable representations. This bias is a product of several factors, with the neural-network initialization playing a pivotal role~\citep{Sutskever-Martens-Dahl-Hinton:Initialization-important:ICML2013}. 
Therefore, we concentrate in this work on studying the initialization of neural networks, 
with the following question guiding this work.

\emph{
    How does the geometric representation of a dataset change after the application of each randomly initialized layer of a neural network?
}

To answer this, we study how the following two geometrical quantities change after each layer.

\begin{center}  \label{similarity}
\begin{tabular}{c l}
     $\inner{x}{y}$    &   The scalar \textit{inner product} between vectors 
     $x$ and $y$.\footnote{We mean here a vector in a wider sense: $x$ and $y$ may be matrices and tensors (of the same dimensions). In this situation, the standard inner product is equal to the vectorization thereof: $\inner{x}{y} = \inner{\mathrm{vec}(x)}{\mathrm{vec}(y)}.$}
  \\ $\rho := \frac{\inner{x}{y}}{\norm{x}\norm{y}}$ & The \textit{cosine similarity} (or simply \textit{similarity)} between vectors $x$ and $y$.
\end{tabular}
\end{center}
 

The similarity $\rho \in [-1, 1]$ between $x$ and $y$ equals $\rho = \cos(\theta)$, where $\theta$ is the angle between them.


Consider first one layer of a fully connected neural network (FNN) with an identity activation (linear FNN) that is initialized by independent identically distributed (i.i.d.) Gaussian weights with mean zero and variance $1/N$, where $N$ is the number of neurons in that layer. 
This random linear FNN induces an isometric embedding of the dataset, namely,
the similarity $\rho_\IN$ between any two inputs, $x_\IN$ and $y_\IN$, is preserved together with their norm: 
\begin{align}
\label{eq:isometry:similarity}
    \varrho_\OUT \approx \brho_\OUT = \rho_\IN,
\end{align} 
where $\varrho_\OUT$ is the similarity between the resulting random (due to the multiplication by the random weights) outputs, $X_\OUT$ and $Y_\OUT$ (respectively), and $\brho_\OUT$ is the mean output similarity defined by 

\begin{align} 
\\[-2.5\baselineskip]
\label{eq:rho_out:def}
    \varrho_\OUT &:= \frac{\inner{X_\OUT}{Y_\OUT}}{\norm{X_\OUT}\norm{Y_\OUT}} ,
    &\text{and }&
  & \brho_\OUT &:= \frac{\E{ \inner{X_\OUT}{Y_\OUT} }}{\sqrt{\E{ \norm{X_\OUT}^2} \E{\norm{Y_\OUT}^2} }} .
\\[-1.4\baselineskip]
\end{align} 
%
%
The proof of this isometric relation between the input and output similarities follows from the celebrated Johnson--Lindenstrauss lemma \citep{Johnson-Lindenstrauss-Lemma:original1984,Johnson-Lindenstrauss-Lemma:Dasgupta-Gupta:probabilistic-proof:1999}.
This lemma states that a random 
linear map 
of dimension $N$ preserves the distance between any two points  
up to an $\eps > 0$ contraction/expansion with probability at least $1 - \delta$ for all $N > c \log(1/\delta) / \eps^2$ for an absolute constant $c$.
In the context of randomly initialized linear FNNs, this result means that,
for a number of neurons $N$ that satisfies $N > c \log(1/\delta) / \eps^2$, 
\begin{align}
    \sP \left( \left| \varrho_\OUT - \rho_\IN \right| < \eps \right) \geq 1 - \delta
\\[-1.4\baselineskip]
\end{align}

So, conceptually,  the Johnson--Lindenstrauss lemma studies how inner products (or geometry) change, in expectation,  after applying a random transformation and how well an average of these random transformations is  concentrated around the expectation. This is the exact setting of randomly initialized neural networks. The random transformations consist of a random projection (multiplying the dataset by a random matrix) which is followed by a non-linearity. 

Naturally, adding a non-linearity complicates the picture.
Let us focus on the case where the activation function is a rectified linear unit (ReLU).
That is, consider a random fully-connected layer with ReLU activation initialized with i.i.d. zero-mean Gaussian weights and two different inputs.
For this case, 
\citet{deep_ker}, \citet{giryes2016deep}, and \citet{daniely} proved that\footnote{The results in \citep{deep_ker} and \citep{daniely} were derived assuming unit-norm vectors $\norm{x_\IN} = \norm{y_\IN} = 1$. The result here follows by the homogeneity of the ReLU activation function: $\ReLU(\alpha x) = \alpha \ReLU(x)$ for $\alpha \geq 0$, and ergodicity, assuming multiple filters are applied.}
\begin{align} 
\label{eq:rho_in-rho_out}
    \varrho_\OUT \approx \brho_\OUT &= 
    \frac{\sqrt{1-\rho_\IN^2} +\left( \pi -\cos^{-1}(\rho_\IN) \right)\rho_\IN }{\pi} .
\\[-1.4\baselineskip]
\end{align} 

\begin{figure}\vspace{-1.5\baselineskip}[h]
 \begin{subfigure}{0.45\textwidth}
        \includegraphics[width=\linewidth]{  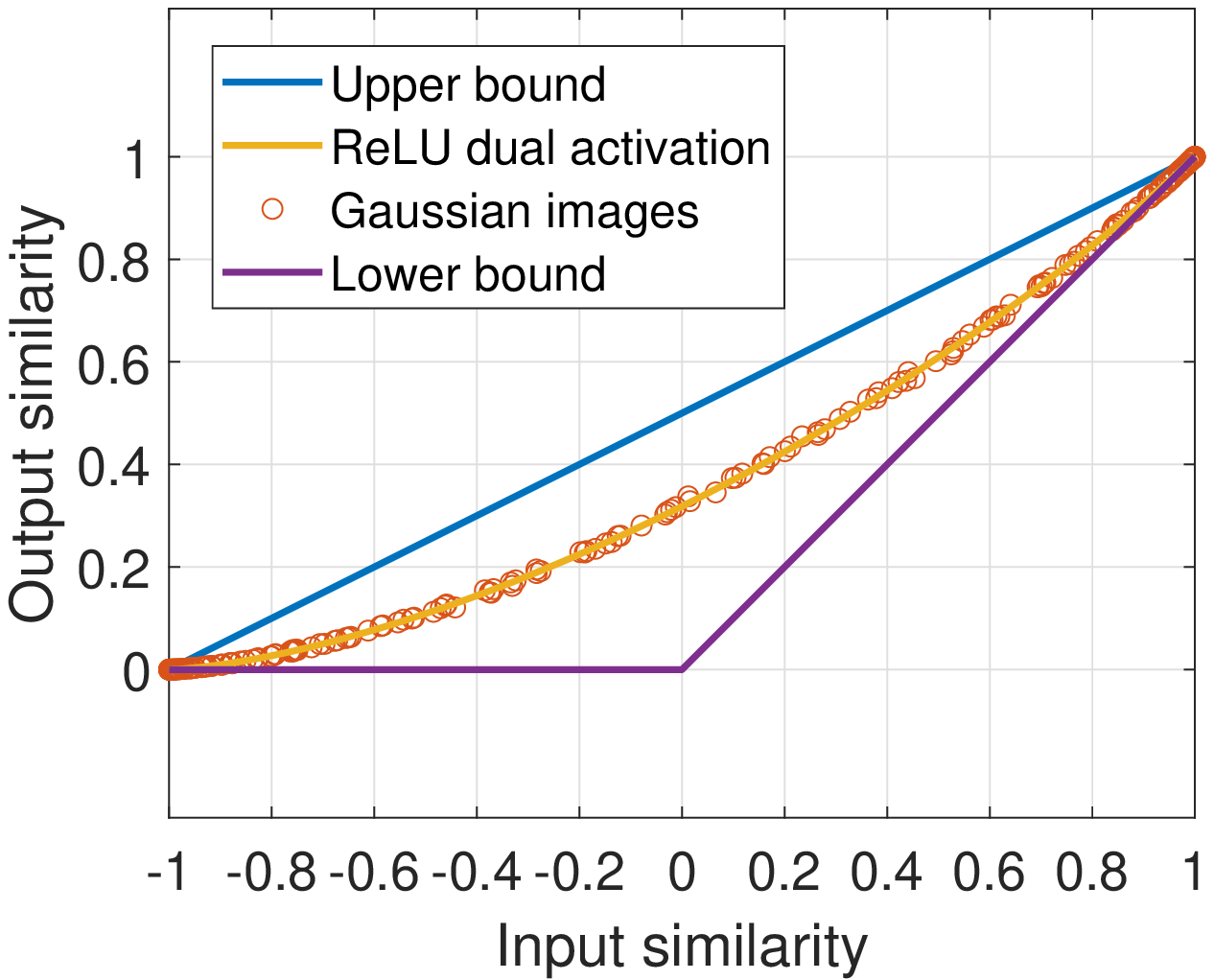}
        \caption{Gaussian images, filter size $3\times 3$}
    \label{fig:gaussian}
    \end{subfigure}\hfill
  \begin{subfigure}{0.45\textwidth}
        \includegraphics[width=\linewidth]{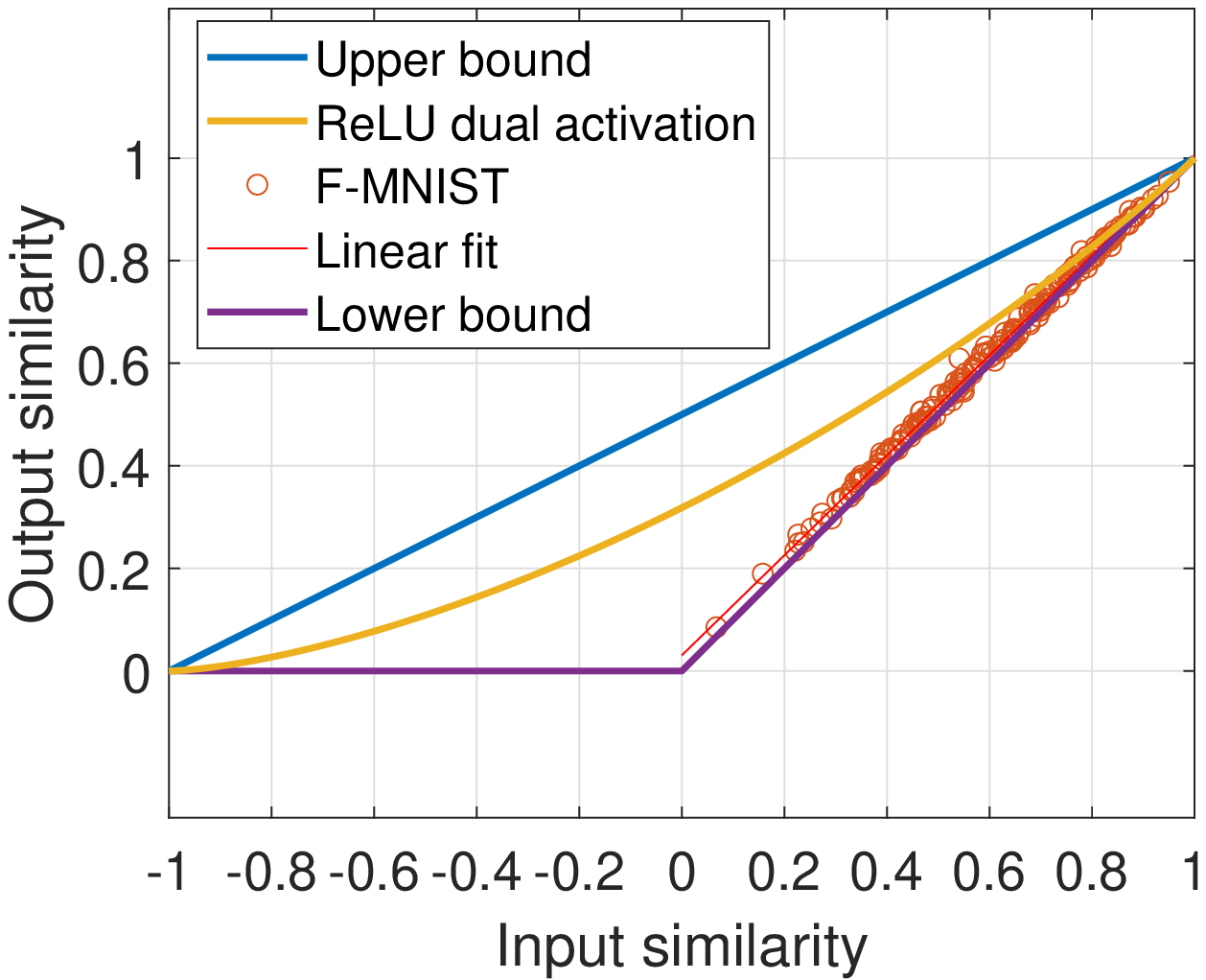} 
        \caption{F-MNIST, filter size $3\times 3$}
    \label{fig:fmnist}
    \end{subfigure}\hfill
     \begin{subfigure}{0.45\textwidth}
        \includegraphics[width=\linewidth]{  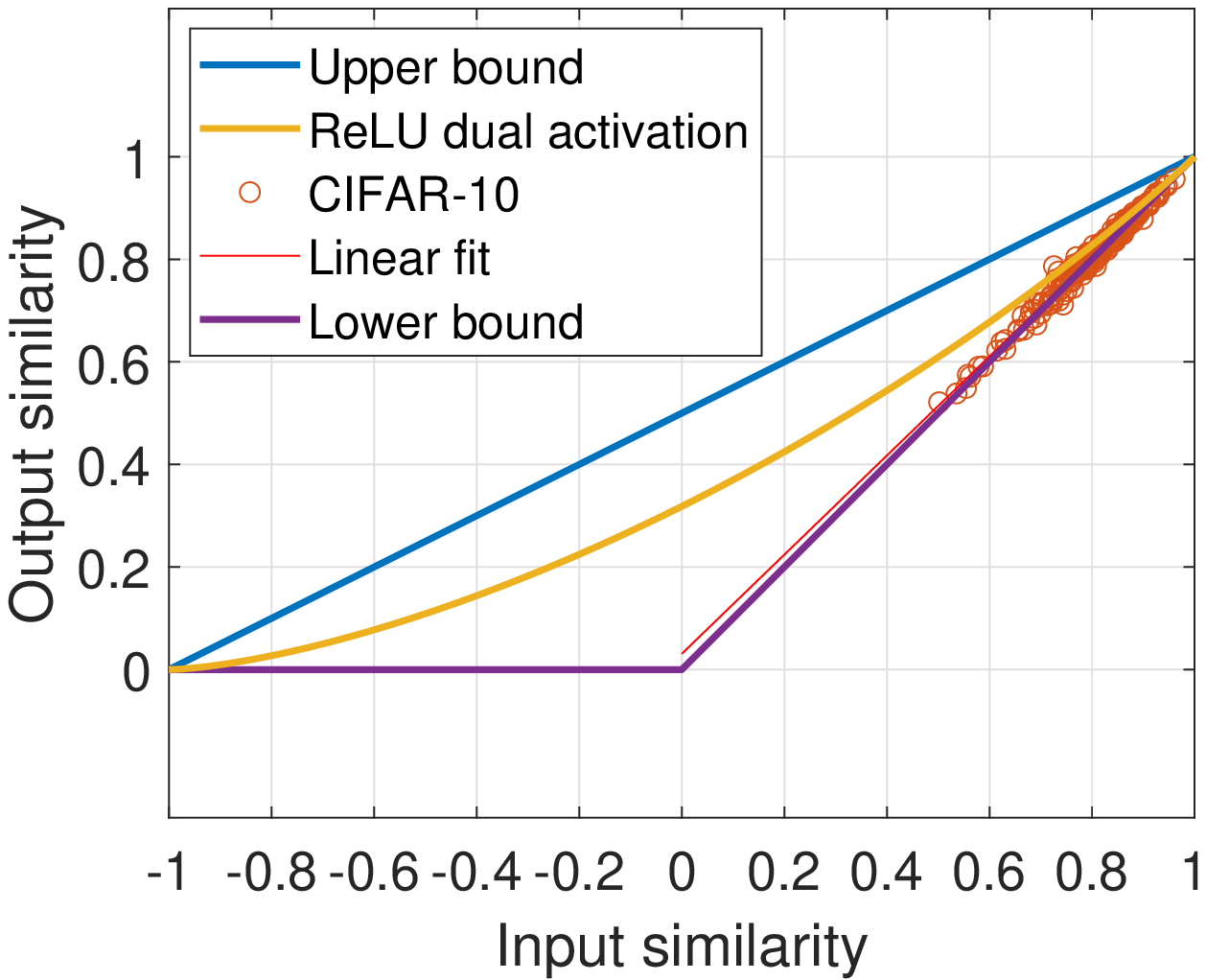}
        \caption{CIFAR-10, filter size $3\times 3\times 3$}
    \label{fig:cifar-10}
    \end{subfigure}\hfill
     \begin{subfigure}{0.45\textwidth}
        \includegraphics[width=\linewidth]{  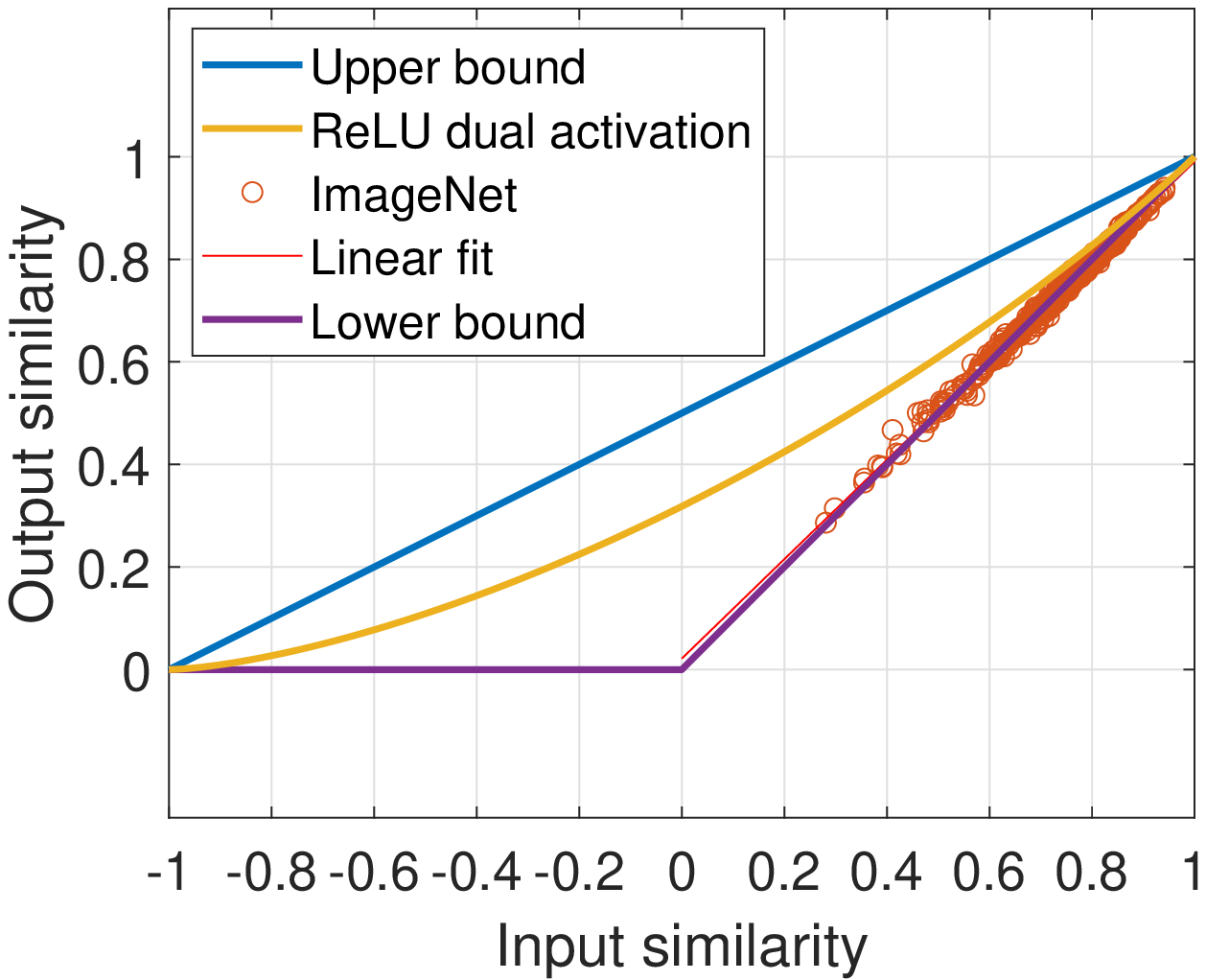}
        \caption{ImageNet, filter size $11\times 11\times 3$}
    \label{fig:imagenet}
    \end{subfigure}
      \caption{Input and output cosine similarities of a single  randomly initialized convolutional layer with 100 filters. Each red circle in the figures represents a random pair of images chosen from the corresponding dataset. 200 pairs were sampled in each figure. }
\label{fig:transform}
 \vspace{-1.5\baselineskip}
\end{figure}

Following \citet{daniely}, we refer to the resulting function in \eqref{eq:rho_in-rho_out} of $\brho_\OUT$ in $\rho_\IN$ as the \textit{dual activation} of the ReLU; this function is represented 
in \figref{fig:transform} by the yellow curve.

One may easily verify that the dual activation \eqref{eq:rho_in-rho_out} 
of the ReLU satisfies $\rho_{\OUT} > \rho_{\IN}$ for $\rho_\IN \neq 1$, meaning that it is a \textit{contraction}.
Consequently, for deep FNNs, which comprise multiple layers, random initialization 
results in a \textit{collapse of all inputs} with the same norm (a sphere) 
\textit{to a single point} at the output of the FNN (equivalently, the entire dataset collapses to a single straight line).


{
Intuitively, this collapse is an unfavorable starting point for optimization. To see why, consider the gradient 
$\nabla_{w^{(i)}_j} $
of the weight $w^{(i)}_j$ of some neuron $j$ in a deep layer $i$ 
in a randomly initialized ReLU FNN. 
By the chain rule (backpropagation), 
this gradient is proportional to the output of the previous layer $a^{(i-1)}$ for the corresponding input,
i.e., it holds that
$\nabla_{w^{(i)}_j} \propto a^{(i-1)}$. 
If the collapse is present already at layer $i$, this output is
essentially proportional to a fixed vector $a^*$. But this implies
that, in the gradient update, the weights of the deep layer will move roughly along a straight line which would impede, in turn, the process of achieving linear separability.
}

Indeed, it is considered that for a FNN to train well, its input--output Jacobian needs to exhibit \emph{dynamical isometry} upon initialization \citep{Saxe14exactsolutions}. Namely, the singular values of the Jacobian $\partial x_\OUT /\partial x_\IN$ must be concentrated around $1$, 
where $x_\OUT$ and $x_\IN$ denote the input and output of the FNN, respectively. 
If the dataset collapses to a line,
$x_\OUT$ is essentially invariant to $x_\IN$ (up to a change in its norm), 
suggesting that the singular values of $\partial x_\OUT /\partial x_\IN$ are close to zero. 
Therefore, randomly initialized FNNs exhibit the \textit{opposite} behavior from dynamical isometry and hence do not train well.



\subsection{Our Contribution}
\label{ss:intro:contributions}

Our main interest lies in the following question.

\textit{Does the contraction observed in randomly initialized ReLU FNNs 
carry over to convolutional neural networks (CNNs)?}

As we will show, qualitatively, the answer is yes.
However, quantitatively, the answer is more subtle as is illustrated in \figref{fig:transform}. In this figure, the similarity between pairs of inputs sampled at random from standard natural image datasets---Fashion MNIST (F-MNIST), CIFAR-10, and ImageNet---are displayed against the corresponding output of a randomly initialized CNN layer. For these datasets, clearly $\varrho_\OUT \approx \rho_\IN$ meaning that the relation of ReLU FNNs \eqref{eq:rho_in-rho_out}---represented in \figref{fig:transform} by the yellow curve---breaks down.

That said, for inputs consisting of i.i.d.\ zero-mean Gaussians (and filters comprising i.i.d.\ zero-mean Gaussian weights as before) with a Pearson correlation coefficient $\rho$ between corresponding entries (and independent otherwise), the relation in \eqref{eq:rho_in-rho_out} between $\brho_\OUT$ and $\rho_\IN$ of ReLU FNNs does hold for ReLU CNNs as well, as illustrated in \figref{fig:gaussian}.


This dataset-dependent behavior, observed in \figref{fig:transform}, 
suggests that,
in contrast to randomly-initialized FNNs which behave according to \eqref{eq:rho_in-rho_out},
randomly-initialized CNNs exhibit a richer behavior: $\brho_\OUT$ does not depend only on $\rho_\IN$ but on the inputs $x_\IN$ and $y_\IN$ themselves. 
Therefore, in this work, we characterize the behavior of $\brho_\OUT$ after applying one layer
in randomly initialized CNNs.

We start by considering randomly initialized CNNs with general activation functions.
We show in \thmref{thm:JL-general} that 
the expected (over the filters) inner product $\E{\inner{X_\OUT}{Y_\OUT}}$ and the mean similarity $\brho_\OUT$ depend on $x_\IN$ and $y_\IN$ (and not just $\inner{x_\IN}{y_\IN}$) by extending the dual-activation notion of \citet{daniely}.
In \thmref{thm:JL-general:CoM}, we further prove that, by taking multiple independent filters, $\inner{X_\OUT}{Y_\OUT}$ and $\varrho_\OUT$ of \eqref{eq:rho_out:def} concentrate around $\E{\inner{X_\OUT}{Y_\OUT}}$ and $\brho_\OUT$, respectively.

We then specialize these results to linear CNNs (with identity activation) and derive a convolution-based variant of the Johnson--Lindenstrauss lemma that shows that
$\inner{X_\OUT}{Y_\OUT} \approx \inner{x_\IN}{y_\IN}$ and $\varrho_\OUT \approx \rho_\IN$ for linear CNNs, both in expectation ($\brho_\OUT = \rho_\IN$ for the latter) and with high probability.

For randomly initialized ReLU CNNs, we derive
the following tight upper and lower bounds for $\brho_{\OUT}$ in terms
of $\rho_{\IN}$ in \thmref{thm:con_map}:
\begin{align} 
\label{eq:rho_out>=rho_in}
    \max \{\rho_\IN, 0\} \leq \brho_\OUT \leq \frac{1 + \rho_\IN}{2} .
\end{align} 
These bounds imply, in turn, that for $\rho_\IN \neq 1$ each ReLU CNN layer is contracting.
In \thmref{thm:gaussian} we prove that $\brho_\OUT$ for random Gaussian data satisfies the relation~\eqref{eq:rho_in-rho_out}, in accordance
with \figref{fig:gaussian}.



To explain the (almost) isometric behavior of CNNs for natural images (\figref{fig:transform}),
we note that many natural images consist of large, relative to the filter size, approximately monochromatic patches. This observation leads 
to 
a simple model of black and white (binary) images with ``large patches''.
To describe this model mathematically, 
we define a notion of a shared boundary between two images
in \defref{def:boundary}, 
and model large patches by bodies whose area is large compared to the shared boundary.
We prove that $\brho_\OUT \approx \rho_\IN$ for this model, meaning
that the lower bound in \eqref{eq:rho_out>=rho_in} is in fact tight.

\subsection{Related Work}


In this paper, we study how various inputs are embedded by randomly initialized convolutional neural networks. Neural tangent kernel (NTK)  \citep{jacot2018neural} is a related line of work. This setting studies the infinite width limit (among other assumptions) in which one can consider neural network training as regression over a fixed kernel;  this kernel is the NTK. 
There are two factors that affect the calculation of the NTK: The embedding of the input space at initialization and  the gradients at initialization. In this paper we study the first one.

\citet{arora2019exact}, and \citet{bietti2019inductive} give expressions for the NTK and the convolutional NTK (CNTK). \Thmref{thm:JL-general} may be implicitly deduced from those expressions.  \citet{arora2019exact}  provide concentration bounds for the NTK of fully connected networks with finite width. \citet{bietti2019inductive} derive smoothness properties for NTKs, 
e.g.,
upper bounds on the deformation induced  by the NTK in terms of the initial Euclidean distance between the inputs. A related approach to NTK is taken in \citep{bietti2021approximation} where convolutional kernel networks (CKN) are used.


Standard initialization techniques use the Glorot initializtion \citep{pmlr-v9-glorot10a} and He initialization \citep{kaiming}. Both were introduced to prevent the gradients from exploding/vanishing. 
On a similar note, \citet{hanin2018start} discuss how to prevent exploding/vanishing mean activation length---which corresponds to the gradients to some degree---in FNN with and without skip connections. For a comprehensive review on other techniques see \citep{init_review}.

\citet{schoenholz2016deep,poole2016exponential,NIPS2017_81c650ca} study the initialization of FNNs using mean-field theory, in a setting where the width of the network is infinite. They demonstrate that for some activation functions there exists an initialization variance such that the network does not suffer from vanishing/exploding gradients. In this case, the network is said to be initialized at the edge of chaos. 

As mentioned earlier, \citet{Saxe14exactsolutions} introduced a stronger requirement than that of moderate gradients at initialization, in the form of \emph{dynamical isometry}. for linear FNNs, they showed that orthogonal initialization achieves dynamical isometry whereas Gaussian i.i.d.\ initialization does not. 

For non-linear FNNs, \citet{pennington2017resurrecting} show that the hyperbolic tangent (tanh) activation can achieve dynamical isometry while ReLU FNNs with Gaussian i.i.d.\ initialization or  orthogonal initialization cannot. 
In contrast, \citet{burkholz2019initialization} show that ReLU FNNs achieve dynamical isometry by moving away from i.i.d.\ initialization. And lastly, \citet{tarnowski2019dynamical} show that residual networks achieve dynamical isometry  over a broad spectrum of activations (including ReLU) and initializations.

\citet{xiao2018dynamical} trained tanh (and not ReLU) CNNs with 10000 layers that achieve dynamical isometry using the delta-orthogonal initialization. Similarly, \citet{zhang2019fixup} trained residual CNNs with 10000 layers  without batch normalization using fixup-initialization that prevents exploding/vanishing gradients.

\subsection{Notation}

\begin{longtable}{cp{.88\textwidth}}
        $\sR, \sN, \sZ$                         & The sets of real, natural, and integer numbers, respectively.
\\[2pt] $\lor, \land$                           & The logical `or' and `and' operators, respectively.
\\[2pt] $\sZ_n$, $[n]$                          & The sets $\{0, 1, \ldots, n-1\}$ and $\{0, 1, \ldots, n-1\}$.
\\[2pt] $A_{ij}$, $T_{ijk}$                     & The $(i,j)$ entry of the matrix $A$ and the $(i,j,k)$ entry of the tensor $T$, respectively.
\\[2pt] $[T]^r_{ij}$                            & The sub-tensor of $T$ formed from rows $i$ to $i+r-1$ and columns $j$ to $j+r-1$ (and full third ``layer'' dimension). 
\\[2pt] $I_k$                                   & The identity matrix of dimensions $k\times k$. 
    %
\\[2pt] $\mathrm{vec}(\cdot)$                   & The vectorization operation of a matrix or a tensor in some systemic order. 
\\[2pt] $\inner{x}{y}$                          & The standard inner product between vectors $x$ and $y$. For matrix or tensor $x$ and $y$ (of the same dimensions) the inner product is defined as $\inner{x}{y} = \inner{\mathrm{vec}(x)}{\mathrm{vec}(y)}$. 
\\[2pt] $\norm{x}$                              & The standard Euclidean norm, induced by the standard inner product: $\norm{x} = \sqrt{\inner{x}{x}}$. 
\\[2pt] $\rho$                                  & The (cosine) similarity. The similarity between $x$ and $y$ is defined as $\rho = \frac{\inner{x}{y}}{\norm{x} \norm{y}}$. 
\\[4pt] $\brho$                                 & The mean similarity. $\brho$ between random $X$ and $Y$ is defined as $\brho = \frac{\E{\inner{X}{Y}}}{\sqrt{\E{\norm{X}^2} \E{\norm{Y}^2}}}$.
\\ $\sigma$, $\hat{\sigma}$                          & An activation function and its dual, respectively, as defined in \defref{def:dual-activation}.
\\[2pt] $\ReLU(x)$                              & The ReLU applied to $x$: $\ReLU(x) := \max\{0, x\}$; for a tensor $x$ the $\ReLU$ is applied entrywise.
\\ $\Rone(\rho)$                           & The dual activation of the ReLU at $\rho \in [-1,1]$: $\Rone(\rho) := \frac{\sqrt{1 - \rho^2} + \left( \pi - \cos^{-1}(\rho) \right) \rho}{\pi}$.
\\[2pt] $\gN(\mu, C)$                       & A Gaussian vector with mean vector $\mu$ and covariance matrix $C$.
\end{longtable}

\section{Setting and Definitions}
\label{s:model}


The output $z \in \RR^{n \times n}$ of a single filter of a CNN is given by 
\begin{align}
    z = \sigma(F * x) 
\end{align}
where $x \in \RR^{n \times n \times d}$ is the input;
$F \in \RR^{r \times r \times d}$ is the filter (or kernel) of this layer;
and $\sigma: \RR \to \RR$ is the activation function of this layer and is understood to apply entrywise when applied to tensors.
The dimensions satisfy $n,d,r \in \sN, r\le n$.
The first two dimensions of $x$ and $z$ are the image dimensions (width and height), while the third dimension is the number of channels, and is most commonly equal to either 1 for grayscale images and 3 for RGB color images; $r$ is the dimension of the filter;
`$*$' denotes the cyclic convolution operation:
\begin{align}
\label{eq:def:convolution}
    (F * x)_{uv} = \sum_{i \in \sZ_r,j \in \sZ_r, k \in \sZ_d} 
    F_{ijk} \cdot x_{u-i,v-j,k}
\end{align}
where the subtraction operation in the arguments is the subtraction operation modulo $n$. 
In this work, 
we will concentrate on the ReLU activation function which is given by $\sigma(x) = \max\{0,x\}$. 
Typically, a single layer of a CNN consists of multiple filters
$F_1,\ldots, F_N$.




The following definition is an adaptation of the dual activation of \citet{daniely} to CNNs.
\begin{definition}[Dual activation]
\label{def:dual-activation}
    The dual activation function $\dualactthree :\RR^n \times \RR^n \times \RR \to \RR $ of an activation function $\sigma$ is defined as 
    \begin{align} 
    \label{eq:def:dual-activation-three}
        \dualactthree(x,y,\nu)=\E{\sigma (\nu X) \cdot 
        \sigma (\nu Y)}
    \end{align} 
    for input vectors $x,y \in \RR^n$, and a parameter $\nu>0$,
    where 
    \begin{align}
    \label{eq:def:dual-activation:Gauss-vec}
        \begin{pmatrix} X \\ Y \end{pmatrix} \sim \gN\left( \begin{pmatrix} 0 \\ 0 \end{pmatrix}, 
         \begin{pmatrix}
            \lv x \rv ^2 &  \inner{x}{y} 
         \\[3pt] 
            \inner{x}{y} &  \lv y \rv ^2 
         \end{pmatrix}
         \right).
     \end{align}
     For tensors $x,y \in \RR^{n \times n \times d}$, 
     the dual activation $\dualactthree$ is defined as in \eqref{eq:def:dual-activation:Gauss-vec}
     with the inner products and norms replaced by the standard inner product between tensors:
     \begin{align} 
        \inner{x}{y} &= \sum_{i \in \sZ_n, j \in \sZ_n, k \in \sZ_d} x_{ijk}\cdot y_{ijk}, 
      & \norm{x} &= \sqrt{\inner{x}{x}} .
     \end{align}
     In the special case of an activation that is homogeneous,
     i.e., satisfying $\sigma(cx)=c\sigma(x)$ for $c\ge 0$,
     we use an additional notation 
     for $-1\leq \rho \leq 1$, with some abuse of notation:
      \begin{align} 
    \label{eq:def:dual-activation}
        \nu^2_\sigma := \E{\sigma(X)^2}\;,
        ~~~~~~
        \dualactone(\rho):=\frac{
        \E{\sigma (X) \cdot \sigma (Y)}}
        {\nu^2_\sigma}\;,
        ~~~~~~ 
        \begin{pmatrix} X \\ Y \end{pmatrix} \sim N\left( \begin{pmatrix} 0 \\ 0 \end{pmatrix}, 
         \begin{pmatrix}
            1 & \rho
         \\[3pt] 
           \rho  &  1 
         \end{pmatrix}
         \right).
    \end{align} 
    
This is the dual activation 
function 
of \citet{daniely}.
Note that, by this definition, $\dualactone(1)=1$.
\end{definition}

We will be mostly interested in the ReLU activation, denoted by $\ReLU$, for which the following holds (see, e.g., \citep[Table~1 and Section~C of supplement]{daniely}) 
\begin{align}
\nu^2_{\ReLU}:=1/2\;,
~~~~~~
\R(\rho):=\frac{\E{\ReLU (X) \cdot \ReLU (Y)}}{\nu^2_{\ReLU}} = \frac{\sqrt{1 - \rho^2} + \left( \pi - \cos^{-1}(\rho) \right) \rho}{\pi}\;.
\end{align}

\section{Main Results}
\label{s:main}

In this section, we present the main results of this work. \textbf{The proofs of all the results in this section may be found in the appendix.}

\begin{theorem}
\label{thm:JL-general}
    Let $x,y\in\RR^{n\times n\times d}$ be inputs to a convolution filter $F\in \RR^{r\times r\times d}$ with $r \leq n$ and activation function $\sigma$ such that the entries of $F$ are i.i.d.\ Gaussian with variance $\nu^2$. 
    Then, 
    \begin{align}\label{eq:10}
        \E{ \inner{\sigma (F * x)}{\sigma( F * y )} } = 
        \sum_{i,j \in \sZ_n} \dualactthree \left(  [x]^r_{ij},[y]^r_{ij},\nu \right).
    \end{align} 
    In particular, for a homogeneous activation function 
    we have 
    \begin{align}
    \E{ \inner{\sigma (F * x)}{\sigma( F * y )} } 
    &=\nu^2\cdot \nu_\sigma^2 \sum_{i,j \in \sZ_n} \lt{ [x]^r_{ij}}  \lt{[y]^r_{ij}} \dualactone \left(  \rho_{ij}  \right),\label{eq:07}\\
    \E{\sigma(F*x)^2}&=\nu^2\cdot\nu_{\sigma}^2
    \cdot r^2 \|x\|^2\;,
    \label{eq:08}
    \end{align}
where $\rho_{ij}=\frac{\inner{[x]^r_{ij}}{[y]^r_{ij}} }  {\lv [x]^r_{ij} \rv \cdot \lv [y]^r_{ij} \rv}$
and $\nu_{\sigma}^2$ has been defined in~\eqref{eq:def:dual-activation}.
\end{theorem}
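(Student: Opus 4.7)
The plan is to reduce the computation to a per-pixel application of the dual activation of Definition~\ref{def:dual-activation} and then sum using linearity of expectation. The key observation is that for each fixed output position $(u,v) \in \sZ_n\times\sZ_n$, the pair
\begin{align}
\bigl((F*x)_{uv},\,(F*y)_{uv}\bigr) = \Bigl(\sum_{i,j,k} F_{ijk}\,x_{u-i,v-j,k},\ \sum_{i,j,k} F_{ijk}\,y_{u-i,v-j,k}\Bigr)
\end{align}
is a centered jointly Gaussian pair, since each entry is a linear combination of the i.i.d.\ Gaussian entries of $F$.

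Next, I would compute its covariance. Let $P_{uv}(x)\in\sR^{r\times r\times d}$ denote the ``patch'' tensor with entries $x_{u-i,v-j,k}$, which (as $(u,v)$ ranges over $\sZ_n\times\sZ_n$ with cyclic indexing) enumerates exactly the sub-tensors $[x]^r_{ij}$ appearing on the right-hand side of \eqref{eq:10}. A direct computation using $\E{F_{ijk}F_{i'j'k'}} = \nu^2\delta_{ii'}\delta_{jj'}\delta_{kk'}$ yields variances $\nu^2\|P_{uv}(x)\|^2$, $\nu^2\|P_{uv}(y)\|^2$ and covariance $\nu^2\langle P_{uv}(x),P_{uv}(y)\rangle$. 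Comparing with~\eqref{eq:def:dual-activation:Gauss-vec}, this is precisely the distribution required to invoke the dual activation, so
\begin{align}
\E{\sigma\bigl((F*x)_{uv}\bigr)\,\sigma\bigl((F*y)_{uv}\bigr)} = \dualactthree\bigl(P_{uv}(x),\,P_{uv}(y),\,\nu\bigr).
\end{align}
Summing over $(u,v)$ and using linearity of expectation together with the reindexing $(u,v)\leftrightarrow(i,j)$ of patches then yields~\eqref{eq:10}.

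For~\eqref{eq:07}, I would invoke homogeneity: if $\sigma(cz)=c\sigma(z)$ for $c\ge 0$, then writing the Gaussian pair in the dual-activation definition as $\|P_{uv}(x)\|\tilde X$ and $\|P_{uv}(y)\|\tilde Y$ with $(\tilde X,\tilde Y)$ standard correlated Gaussians of correlation $\rho_{ij}$, one obtains
\begin{align}
\dualactthree\bigl(P_{uv}(x),P_{uv}(y),\nu\bigr) = \nu^2\,\|P_{uv}(x)\|\,\|P_{uv}(y)\|\,\E{\sigma(\tilde X)\sigma(\tilde Y)} = \nu^2\nu_\sigma^2\,\|[x]^r_{ij}\|\,\|[y]^r_{ij}\|\,\dualactone(\rho_{ij}),
\end{align}
which is exactly~\eqref{eq:07}. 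Finally, \eqref{eq:08} follows by applying the homogeneous identity with $y=x$ (so $\rho_{ij}=1$ and $\dualactone(1)=1$), giving the sum $\nu^2\nu_\sigma^2\sum_{i,j}\|[x]^r_{ij}\|^2$, and then using the combinatorial identity that, under cyclic indexing, every entry $x_{abk}$ is contained in exactly $r^2$ of the patches $[x]^r_{ij}$, so $\sum_{i,j}\|[x]^r_{ij}\|^2=r^2\|x\|^2$.

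I expect the only delicate point to be bookkeeping: verifying that the patch enumeration obtained from cycling $(u,v)$ in the convolution coincides (up to reindexing within each patch, which does not affect norms or inner products) with the patches $[x]^r_{ij}$ in the theorem statement, and checking the $r^2$-counting identity. Homogeneity handles positivity for free since $\nu>0$ and $\|P_{uv}(x)\|\ge 0$, so no sign case-analysis is needed.
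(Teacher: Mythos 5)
Your proposal is correct and follows essentially the same route as the paper: reduce each output pixel to a jointly Gaussian pair whose covariance matches Definition~\ref{def:dual-activation}, sum by linearity of expectation, pull out norms via homogeneity for~\eqref{eq:07}, and specialize to $y=x$ with the $r^2$-counting identity for~\eqref{eq:08}. The only organizational difference is that the paper packages the covariance computation as a standalone lemma (reused later for Gaussian inputs) rather than computing it inline.
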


In particular, due to~\eqref{eq:08}, for homogeneous
activations we can choose $\nu=1/(\nu_{\sigma}r)$ and
get $\E{\sigma(F*x)^2}=\|x\|^2$. That is, the norms
are preserved in expectation for every input $x$.

    Note that
    \thmref{thm:JL-general} may be deduced as a
    special case from existing more general formulas;
    see, e.g., \citet{arora2019exact, bietti2019inductive}. Nevertheless, it is an
    important starting point for us.


While \thmref{thm:JL-general} holds in the mean sense, it does not hold for a specific realization of a single filter $F$, in general. 
The next theorem, whose proofs relies on the notions of sub-Gaussian and sub-exponential RVs and their concentration of measure, 
states that \thmref{thm:JL-general} holds approximately for a large enough number of applied filters. 

We apply basic,
general concentration bounds on the activation. 
There exist sharper bounds in special cases, e.g., in
the context of cosine similarity and ReLU, see \citep[app.~E]{buchanan2020deep}.
\begin{theorem}
\label{thm:JL-general:CoM} 
    Let $x,y\in\RR^{n\times n\times d}$ be inputs to $N$ filters  $F_1,\ldots,F_N\in \RR^{r\times r\times d}$ with $r \leq n$ such~that 
    \begin{align}
    \label{eq:thm:JL-general:CoM:norm-bound}
        \max_{i,j \in \sZ_n} \lt{ [x]^r_{ij}} &\leq R, 
      & \max_{i,j \in \sZ_n} \lt{ [y]^r_{ij}} &\leq R,
   \end{align}
    all the entries of all the filters are i.i.d.\ Gaussian with zero mean and variance $\nu^2$, and a Lipschitz continuous activation function $\sigma$ with a Lipschitz constant $L$ 
   and satisfying $\sigma(0)=0$. 
    Then, for $\delta > 0$, however small, 
    \begin{align}
    \label{eq:thm:CoM}
        \sP \left( \left| \frac{1}{N} \sum_{\ell=1}^N \inner{\sigma (F_\ell * x)}{\sigma( F_\ell * y )}   - \sum_{i,j=1}^n \dualactthree\left(  [x]^r_{ij},[y]^r_{ij},\nu \right)\right| \geq  \epsilon \right) \leq \delta
    \end{align} 
    for $N >  \max \left(K, K^2 \right) \log \frac{2 n^2}{\delta}$, 
    where 
    $K = D\nu^2 L^2 R^2 n^2 / \eps$ and $D > 0$ is an absolute constant.
\end{theorem}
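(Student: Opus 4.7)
The plan is to reduce the tail bound in~\eqref{eq:thm:CoM} to a per-position application of Bernstein's inequality for sub-exponential random variables, followed by a union bound over the $n^2$ spatial positions. Since \thmref{thm:JL-general} already identifies the mean $\sum_{i,j \in \sZ_n} \dualactthree([x]^r_{ij}, [y]^r_{ij}, \nu)$ of each $\inner{\sigma(F_\ell * x)}{\sigma(F_\ell * y)}$, only the fluctuations around this mean need to be controlled.

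First I would write
\[
\frac{1}{N} \sum_{\ell=1}^N \inner{\sigma(F_\ell * x)}{\sigma(F_\ell * y)} = \sum_{i,j \in \sZ_n} \frac{1}{N} \sum_{\ell=1}^N T_{\ell,i,j}, \qquad T_{\ell,i,j} := \sigma\bigl((F_\ell * x)_{ij}\bigr)\, \sigma\bigl((F_\ell * y)_{ij}\bigr),
\]
and control each inner mean over $\ell$ separately. For fixed $(i,j)$ the entry $(F_\ell * x)_{ij}$ is a linear combination of the i.i.d.\ Gaussian entries of $F_\ell$ with coefficient tensor $[x]^r_{ij}$, hence centered Gaussian with variance $\nu^2 \norm{[x]^r_{ij}}^2 \le \nu^2 R^2$; the analogous bound holds for $(F_\ell * y)_{ij}$. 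Because $\sigma$ is $L$-Lipschitz with $\sigma(0)=0$, we have $|\sigma(z)| \le L|z|$, so each of $\sigma((F_\ell * x)_{ij})$ and $\sigma((F_\ell * y)_{ij})$ is sub-Gaussian with $\psi_2$-norm $O(L \nu R)$.

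Using the standard product inequality $\norm{XY}_{\psi_1} \le c \norm{X}_{\psi_2} \norm{Y}_{\psi_2}$, the summand $T_{\ell,i,j}$ is sub-exponential with $\psi_1$-norm at most $K_0 := C L^2 \nu^2 R^2$ for an absolute constant $C$. For fixed $(i,j)$ the $\{T_{\ell,i,j}\}_{\ell=1}^N$ are i.i.d.\ (since the filters $F_\ell$ are independent), so Bernstein's inequality yields
\[
\sP\!\left( \left| \frac{1}{N}\sum_{\ell=1}^N \bigl(T_{\ell,i,j} - \E{T_{\ell,i,j}}\bigr) \right| \ge \frac{\eps}{n^2} \right) \le 2 \exp\!\left( -c' N \min\!\left( \frac{\eps^2}{n^4 K_0^2},\; \frac{\eps}{n^2 K_0} \right) \right).
\]

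To finish, I would union-bound over the $n^2$ positions, which absorbs an $n^2$ factor into the logarithm; when every per-position deviation is at most $\eps/n^2$, the triangle inequality gives an overall deviation of at most $\eps$, which is the event negated in~\eqref{eq:thm:CoM}. Setting $K := D \nu^2 L^2 R^2 n^2/\eps$ to absorb both $K_0$ and the $n^2/\eps$ from the target precision, the two regimes of the Bernstein tail correspond exactly to the $K^2$ and $K$ branches of $\max(K,K^2)$, giving the threshold $N > \max(K, K^2) \log(2 n^2/\delta)$ for a suitable absolute constant $D$. The main obstacle is pinning down this constant $D$ consistently through the sub-Gaussian-to-sub-exponential product bound and through both regimes of Bernstein's inequality; the conceptual core is simply that, once $\sigma$ is applied to linear Gaussians, the Lipschitz hypothesis and $\sigma(0)=0$ are enough to place each $T_{\ell,i,j}$ into a well-behaved Orlicz class with parameters independent of $n$ and $N$.
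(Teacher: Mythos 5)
Your proposal is correct and follows essentially the same route as the paper's proof: identify each $(F_\ell * x)_{ij}$ as a centered Gaussian with variance $\nu^2\lt{[x]^r_{ij}}^2 \le \nu^2 R^2$, use the Lipschitz hypothesis and $\sigma(0)=0$ to place $\sigma((F_\ell*x)_{ij})$ in the sub-Gaussian class with $\psi_2$-norm $O(L\nu R)$, pass to the sub-exponential product, and combine Bernstein at level $\eps/n^2$ with a union bound over the $n^2$ positions. The only step left implicit is that Bernstein requires the \emph{centered} products, whose $\psi_1$-norm is controlled via the standard centering property; this is exactly how the paper handles it, so no substantive difference remains.
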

\begin{remark}
    Clearly, $\|[x]_{ij}^r\|$ and $\|[y]_{ij}^r\|$ in \eqref{eq:thm:JL-general:CoM:norm-bound} may be bounded by $\norm{x}$ and $\norm{y}$, respectively. 
    However, since $r$ is typically much smaller than $n$, 
    we prefer to state the result in terms of the norms of $[x]_{ij}^r$ and $[y]_{ij}^r$.
\end{remark}

\Thmref{thm:JL-general:CoM} states a concentration
result in terms of the inner product.
We now give a parallel result for homogeneous activations
in terms of the cosine similarity $\varrho_\OUT$.
To that end,
consider a homogeneous activation $\sigma$ and $N$ random filters
$F_1,\ldots,F_N$;
for these filters, the quantities of \eqref{eq:rho_out:def} 
for two inputs $x$ and $y$ are equal to 
\begin{align}\label{eq:11}
    \varrho_\OUT &=
    \frac{\sum_{\ell=1}^N\inner{\sigma \left( F_\ell*x \right)}{\sigma \left( F_\ell*y \right)}}
    {\sqrt{\sum_{\ell=1}^N \left\|\sigma \left( F_\ell*x \right)\right\|^2}
    \sqrt{\sum_{\ell=1}^N \left\|\sigma \left( F_\ell*y \right)\right\|^2}},
  & \brho_\OUT & =
    \frac{
    \sum_{i,j\in\mathbb{Z}_n}\|[x]_{ij}^r\|\cdot
    \|[y]_{ij}^r\|\dualactone \left( \rho_{ij} \right)
    }{r^2\|x\|\cdot\|y\|},
\end{align}
where $\rho_{ij}$ is defined in \thmref{thm:JL-general}.
Applying \thmref{thm:JL-general:CoM} to 
these quantities, yields the following. 
\begin{corollary}\label{cor:similarity}
Let $x,y$ be inputs to $N$ filters $F_1,\ldots,F_N$ such that 
\begin{align}
    \max_{i,j\in\mathbb{Z}_n} \frac{\|[x]_{ij}^r\|}{\|x\|}\le R\;,
    ~~~~~~~~
    \max_{i,j\in\mathbb{Z}_n} \frac{\|[y]_{ij}^r\|}{\|y\|}\le R\;,
\end{align}
all the
entries of the filters are i.i.d.\ Gaussian with zero mean and variance
$\nu^2$, and a Lipschitz homogeneous activation $\sigma$ with Lipschitz
constant $L$. Then, for $0<\epsilon\le1/10$ and $\delta>0$,
\begin{align}
    \sP\big(\left|\varrho_\OUT(x,y)-\brho_\OUT(x,y)\right|
    \ge\eps\big)\le\delta
\end{align}
for $N>\max(K,K^2)\log\frac{6n^2}{\delta}$, where
$K=\frac{DL^2R^2n^2}{\epsilon\nu_{\sigma}^2r^2}$ and $D>0$ is an absolute constant.
\end{corollary}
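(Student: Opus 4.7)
The plan is to reduce the statement to the inner-product concentration of Theorem~2 via a union bound over three events, and then convert concentration of numerator and denominator quantities into concentration of their ratio.

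First, I exploit homogeneity: since $\sigma(cx)=c\sigma(x)$ for $c\ge 0$, both $\varrho_\OUT$ and $\brho_\OUT$ are invariant under positive scaling of $x$ and $y$, so I may assume $\|x\|=\|y\|=1$, which turns the hypothesis into $\max_{ij}\|[x]_{ij}^r\|\le R$ and $\max_{ij}\|[y]_{ij}^r\|\le R$, exactly the setting of Theorem~2. Write $A=\frac{1}{N}\sum_\ell\inner{\sigma(F_\ell*x)}{\sigma(F_\ell*y)}$ and $B,C$ for the analogous quantities with $(x,x)$ and $(y,y)$, so that $\varrho_\OUT=A/\sqrt{BC}$. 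By Theorem~1, equations~\eqref{eq:07} and~\eqref{eq:08}, $\E{A}=\nu^2\nu_\sigma^2 r^2\brho_\OUT$ and $\E{B}=\E{C}=\nu^2\nu_\sigma^2 r^2$. Rescaling by $\nu^2\nu_\sigma^2 r^2$, let $\tilde A,\tilde B,\tilde C$ denote the normalized versions; then $\varrho_\OUT=\tilde A/\sqrt{\tilde B\tilde C}$, $\E{\tilde A}=\brho_\OUT$, and $\E{\tilde B}=\E{\tilde C}=1$.

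Next, I apply Theorem~2 three times, to the pairs $(x,y)$, $(x,x)$, and $(y,y)$, with an auxiliary tolerance $\epsilon_0$ (to be chosen as a small universal constant multiple of $\epsilon$) and failure probability $\delta/3$ each. Converting the tolerance on the unnormalized inner product into the tolerance $\epsilon_0$ on the normalized one introduces a factor of $\nu^2\nu_\sigma^2 r^2$; this cancels the $\nu^2$ in Theorem~2's $K$ and inserts the denominator $\nu_\sigma^2 r^2$, producing precisely $K=DL^2R^2n^2/(\epsilon\nu_\sigma^2 r^2)$ after the constants are absorbed into $D$. A union bound then gives that $|\tilde A-\E{\tilde A}|\le\epsilon_0$, $|\tilde B-1|\le\epsilon_0$, $|\tilde C-1|\le\epsilon_0$ hold simultaneously with probability at least $1-\delta$.

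Finally, on this event, I bound the ratio deviation by splitting
\begin{align}
\left|\tfrac{\tilde A}{\sqrt{\tilde B\tilde C}}-\E{\tilde A}\right|
\le \tfrac{|\tilde A-\E{\tilde A}|}{\sqrt{\tilde B\tilde C}}
  + |\E{\tilde A}|\cdot\left|1-\tfrac{1}{\sqrt{\tilde B\tilde C}}\right|.
\end{align}
The hypothesis $\epsilon\le 1/10$, together with $\epsilon_0\le\epsilon$, gives $\tilde B,\tilde C\in[9/10,11/10]$, so $\sqrt{\tilde B\tilde C}$ is bounded away from $0$, and a short elementary computation shows $|1-1/\sqrt{(1+a)(1+b)}|\le c\max(|a|,|b|)$ for $|a|,|b|\le 1/10$ and some universal constant $c$. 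Combined with Cauchy--Schwarz $|\brho_\OUT|\le 1$, this yields $|\varrho_\OUT-\brho_\OUT|\le c'\epsilon_0$ for a universal $c'$, and choosing $\epsilon_0=\epsilon/c'$ (again folded into $D$) completes the proof.

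The main obstacle is purely bookkeeping: tracking how the three rescalings—Theorem~2's inner-product tolerance to $\epsilon_0$, $\epsilon_0$ to the final $\epsilon$ through the ratio-sensitivity estimate, and the common norm bound $R$ vs.\ the normalized bound—compose so that they all fit into a single universal constant $D$ and the clean $K=DL^2R^2n^2/(\epsilon\nu_\sigma^2 r^2)$.
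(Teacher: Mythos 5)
Your proposal is correct and follows essentially the same route as the paper's proof: normalize $x,y$ to unit vectors by homogeneity, apply Theorem~\ref{thm:JL-general:CoM} three times to the pairs $(x,y)$, $(x,x)$, $(y,y)$ with a reduced tolerance and failure probability $\delta/3$ each, take a union bound, and finish with an elementary perturbation bound for the ratio $A/\sqrt{BC}$ using $\epsilon\le 1/10$. The only (immaterial) difference is that the paper fixes $\nu=1/(\nu_\sigma r)$ up front via scale invariance, whereas you keep $\nu$ general and rescale the empirical sums by $\nu^2\nu_\sigma^2 r^2$; the bookkeeping that yields $K=DL^2R^2n^2/(\epsilon\nu_\sigma^2 r^2)$ is the same either way.
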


\begin{remark}
    To make sense out of the constants in
    \corref{cor:similarity}, we point out that
    applying it to inputs 
    $x,y\in\{\pm 1\}^{n\times n\times d}$
    (so that $\|x\|^2=\|y\|^2=dn^2$ and
    $\|[x]_{ij}^r\|^2=\|[y]_{ij}^r\|^2=dr^2$)
    results in $K=\frac{DL^2}{\epsilon\nu_{\sigma}^2}$,
    which does not depend on $n$ or $\nu^2$.
    There remains a $\log n^2$ factor that increases with
    $n$.
\end{remark}

In the remainder of this section, we derive an analogous result to the
Johnson--Lindenstrauss lemma \citep{Johnson-Lindenstrauss-Lemma:original1984,Johnson-Lindenstrauss-Lemma:Dasgupta-Gupta:probabilistic-proof:1999} 
in \secref{ss:main:linearCNN}
for random \textit{linear} CNNs, namely, CNNs with identity activation functions. We then move to treating random CNNs with ReLU activation functions: We derive upper and lower bounds on the inner product between the outputs in \secref{ss:main:bounds}, and prove that they are achievable for Gaussian inputs in \secref{ss:main:GaussianIn}, and for large convex-body inputs in \secref{ss:main:convex-body}, respectively.

\vspace{-.2\baselineskip}
\subsection{Linear CNN}
\label{ss:main:linearCNN}
\vspace{-.2\baselineskip}

\Thmref{thm:JL-general} yields a variant of the Johnson--Lindenstrauss lemma 
where the random projection is replaced with random filtering,
that is, by applying a (properly normalized) random filter, the inner product is preserved (equivalently, the angle).
 
\begin{lemma}
\label{lem:JL}
    Consider the setting of \thmref{thm:JL-general} with $\nu = 1/r$.
    Then, 
        $\E{\inner{F * x}{F * y}} =  \inner{x}{y}$.
\end{lemma}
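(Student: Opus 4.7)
The plan is to invoke \thmref{thm:JL-general} directly with $\sigma$ taken to be the identity activation, and then reduce the resulting patch-sum to $r^2\inner{x}{y}$ via a short combinatorial argument that exploits cyclicity of the convolution.

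First, I would compute the dual activation of $\sigma = \mathrm{id}$ on a pair of tensors $(u,v)$. From \defref{def:dual-activation}, $\dualactthree(u,v,\nu) = \E{\nu X\cdot \nu Y}$ where $(X,Y)$ is jointly Gaussian with $\E{XY} = \inner{u}{v}$, so $\dualactthree(u,v,\nu) = \nu^2\inner{u}{v}$. Plugging this into \eqref{eq:10} reduces the lemma to the deterministic identity
\begin{align}
\sum_{i,j \in \sZ_n} \inner{[x]^r_{ij}}{[y]^r_{ij}} \;=\; r^2 \inner{x}{y}.
\end{align}
After that, the choice $\nu = 1/r$ gives $\E{\inner{F*x}{F*y}} = (1/r^2)\cdot r^2 \inner{x}{y} = \inner{x}{y}$.

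The one step worth actually proving is the displayed identity, and I would do it by swapping the order of summation so that the outer sum ranges over coordinates $(a,b,k)$ of the tensors. The key observation is that, because $[\cdot]^r_{ij}$ is defined with indices taken modulo $n$, every coordinate $(a,b,k)$ lies in exactly $r^2$ patches, namely those indexed by $i\in\{a-r+1,\ldots,a\}\bmod n$ and $j\in\{b-r+1,\ldots,b\}\bmod n$. Hence each term $x_{abk}y_{abk}$ contributes with multiplicity $r^2$, giving the factor $r^2$ on the right.

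No step is genuinely difficult, so there is no real obstacle; the only subtlety is that the argument depends crucially on cyclic (rather than zero-padded or truncated) convolution, which is precisely what makes the multiplicity uniformly equal to $r^2$ and turns an approximate isometry into an exact one in expectation. If non-cyclic convolution were used, boundary coordinates would appear in fewer patches and the identity would hold only up to boundary corrections.
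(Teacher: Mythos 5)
Your proposal is correct and follows essentially the same route as the paper: specialize \thmref{thm:JL-general} to the identity activation, observe that the dual activation becomes $\nu^2\inner{u}{v}$, and use $\nu^2 = 1/r^2$ together with the patch-sum identity $\sum_{i,j\in\sZ_n}\inner{[x]^r_{ij}}{[y]^r_{ij}} = r^2\inner{x}{y}$. The only difference is that you spell out the double-counting argument (each coordinate lies in exactly $r^2$ cyclic patches) that the paper leaves implicit in its step $(c)$, and your remark about cyclic versus zero-padded convolution is a correct and worthwhile observation.
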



\begin{lemma}
\label{lem:JL:CoM}
    Consider the setting of \thmref{thm:JL-general:CoM} with $\nu = 1/r$ (and $\sigma(x) \equiv x$).
    Then, 
    \begin{align}
    \\[-1.2\baselineskip]
        \sP \left( \left| \frac{1}{N} \sum_{\ell=1}^N \inner{F_\ell * x}{F_\ell * y}   - \inner{x}{y} \right| \geq  \epsilon \right) \leq \delta
    \end{align} 
    for $N > \max \left( K, K^2 \right) \log \frac{2n^2}{\delta}$, 
    where $K = \frac{  D R^2 n^2 }{r^2 \epsilon}$ and $D > 0$ is an absolute constant.
\end{lemma}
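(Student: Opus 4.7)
}
The plan is to obtain this as a direct specialization of \thmref{thm:JL-general:CoM} to the identity activation $\sigma(x)\equiv x$ together with the scaling $\nu=1/r$. The identity is Lipschitz with constant $L=1$ and satisfies $\sigma(0)=0$, so the hypotheses of \thmref{thm:JL-general:CoM} are met, and it only remains to (i) identify the expectation term with $\inner{x}{y}$ and (ii) track how the constants collapse.

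First I would compute the dual activation of the identity: for any two tensors $u,v$ of matching shape and any $\nu>0$, the Gaussian pair $(X,Y)$ of \eqref{eq:def:dual-activation:Gauss-vec} satisfies $\E{\nu X\cdot \nu Y}=\nu^2 \inner{u}{v}$, so $\dualactthree(u,v,\nu)=\nu^2\inner{u}{v}$. Substituting $u=[x]_{ij}^r$, $v=[y]_{ij}^r$ into \eqref{eq:10} yields
\begin{align}
    \E{\inner{F*x}{F*y}}=\nu^2\sum_{i,j\in\sZ_n}\inner{[x]_{ij}^r}{[y]_{ij}^r}.
\end{align}
The second step is a combinatorial identity: under cyclic convolution, each entry $x_{uvk}$ of $x$ appears in exactly $r^2$ of the $n^2$ patches $[x]_{ij}^r$ (one for each shift $(i,j)\in\sZ_r\times\sZ_r$ that brings it inside the window), and similarly for $y$. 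Therefore
\begin{align}
    \sum_{i,j\in\sZ_n}\inner{[x]_{ij}^r}{[y]_{ij}^r}=r^2\inner{x}{y},
\end{align}
and with the choice $\nu=1/r$ the expectation reduces to $\inner{x}{y}$, recovering \lemref{lem:JL} as a byproduct.

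At this point the concentration statement follows verbatim from \thmref{thm:JL-general:CoM}, since the sum inside the probability in \eqref{eq:thm:CoM} equals $\inner{x}{y}$. Plugging $L=1$ and $\nu^2=1/r^2$ into the constant $K=D\nu^2 L^2 R^2 n^2/\eps$ from that theorem yields $K=DR^2n^2/(r^2\eps)$, which matches the claim. There is essentially no obstacle beyond bookkeeping: the only point worth double-checking is the patch-covering count for the cyclic convolution \eqref{eq:def:convolution}, which is immediate because the indexing is modulo $n$ and hence every entry is hit by exactly $r^2$ shifted windows. All other hypotheses of \thmref{thm:JL-general:CoM} (the norm bound $R$ on patches, the Lipschitz and zero-at-zero conditions on $\sigma$) carry over unchanged.
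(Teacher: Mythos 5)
Your proposal is correct and follows the same route as the paper: specialize \thmref{thm:JL-general:CoM} to the identity activation (with $L=1$, $\sigma(0)=0$) and identify the expectation term with $\inner{x}{y}$ via the identity's dual activation $\dualactthree(u,v,\nu)=\nu^2\inner{u}{v}$ and the choice $\nu=1/r$, which is exactly the content of \lemref{lem:JL}. Your explicit patch-covering count (each entry lies in exactly $r^2$ cyclic windows) is a correct spelling-out of the step the paper leaves implicit in ``$(c)$ holds since $\nu^2=1/r^2$.''
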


Related result appears in~\citet{krahmer2014suprema} (theorem 1.1) which shows norm preservation of sparse vectors for convolutional operators with a filter dimension that equals the input dimension.

\vspace{-.2\baselineskip}
\subsection{CNN with ReLU Activation}
\label{ss:main:bounds}
\vspace{-.2\baselineskip}



The following theorem implies that every layer of a neural network is contracting in expectation (over $F$). 
That is, the expected inner product between any two data points will get larger with each random CNN layer.
We also develop an upper bound on the new correlation.

\begin{theorem}
\label{thm:con_map}
    Consider the setting of \thmref{thm:JL-general} with a ReLU activation function and variance $\nu =2/r^2$, 
    for inputs $x$ and $y$ with similarity $\rho$.
    Then, 
    \begin{align}
    \\[-1.2\baselineskip]
    \label{eq:thm:con_map:bounds}
        \max  \{ \inner{x}{y} ,0  \}  
        \leq \E{ \inner{\ReLU (F * x )}{\ReLU( F * y ) }} \leq \lt{x} \lt{y} \frac{1 + \rho}{2} ,
    \\[-1.1\baselineskip]
    \end{align} 
\end{theorem}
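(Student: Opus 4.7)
The plan is to apply Theorem~\ref{thm:JL-general} with the ReLU activation (for which $\nu_{\ReLU}^2 = 1/2$) and variance $\nu^2 = 2/r^2$, which gives the explicit expectation
\begin{align}
    \E{ \inner{\ReLU (F * x)}{\ReLU( F * y )} } = \frac{1}{r^2} \sum_{i,j \in \sZ_n} \lt{ [x]^r_{ij}}\lt{ [y]^r_{ij}} \R(\rho_{ij}),
\end{align}
where $\rho_{ij}$ is the cosine similarity between patches $[x]^r_{ij}$ and $[y]^r_{ij}$. From this representation, both bounds reduce to pointwise scalar inequalities on $\R(\rho)$ combined with two elementary double-counting identities on patches.

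First I would establish the patch identities that arise from the cyclic-convolution structure: since every entry of $x$ (resp.\ $y$) appears in exactly $r^2$ patches $[x]^r_{ij}$, a direct re-indexing yields
\begin{align}
    \sum_{i,j \in \sZ_n} \inner{[x]^r_{ij}}{[y]^r_{ij}} = r^2 \inner{x}{y}, \qquad \sum_{i,j \in \sZ_n} \lt{[x]^r_{ij}}^2 = r^2 \lt{x}^2,
\end{align}
and similarly for $y$. These convert patch-wise sums into the global inner product and norms that appear in the statement.

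For the upper bound, I would prove the scalar inequality $\R(\rho) \leq \frac{1+\rho}{2}$ on $[-1,1]$. The chord $\tfrac{1+\rho}{2}$ matches $\R$ at the endpoints, since $\R(1)=1$ and $\R(-1)=0$, so it suffices to show $\R$ is convex; differentiating the expression for $\R$ twice gives $\R''(\rho) = \frac{1}{\pi\sqrt{1-\rho^2}} \geq 0$, confirming convexity. Inserting this bound and using $\lt{[x]^r_{ij}}\lt{[y]^r_{ij}}\rho_{ij} = \inner{[x]^r_{ij}}{[y]^r_{ij}}$, the sum splits into $\frac{1}{2r^2} \sum_{i,j} \lt{[x]^r_{ij}}\lt{[y]^r_{ij}} + \frac{1}{2r^2}\sum_{i,j}\inner{[x]^r_{ij}}{[y]^r_{ij}}$; applying Cauchy--Schwarz to the first sum together with the second patch identity bounds it by $\lt{x}\lt{y}$, while the second patch identity directly turns the second sum into $\inner{x}{y}$. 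Summing gives the desired $\lt{x}\lt{y}\tfrac{1+\rho}{2}$.

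For the lower bound, I would use that $\R(\rho) \geq \max\{\rho,0\}$. Non-negativity of $\R$ is immediate from its definition as a normalized expectation of a product of non-negative ReLU outputs, which, combined with non-negativity of the patch norms, yields $\E{\inner{\ReLU(F*x)}{\ReLU(F*y)}} \geq 0$. For the inequality $\R(\rho) \geq \rho$, one notes $\R(1)=1$, $\R(-1)=0 \geq -1$, and that $\R - \mathrm{id}$ is convex and vanishes at $\rho=1$, so it stays non-negative on $[-1,1]$. Substituting $\R(\rho_{ij}) \geq \rho_{ij}$ and again using $\lt{[x]^r_{ij}}\lt{[y]^r_{ij}}\rho_{ij} = \inner{[x]^r_{ij}}{[y]^r_{ij}}$ together with the first patch identity collapses the sum to exactly $\inner{x}{y}$, giving the $\max\{\inner{x}{y},0\}$ lower bound.

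The main obstacle I expect is verifying the scalar inequality $\R(\rho) \leq (1+\rho)/2$ cleanly; it is visually the chord of $\R$ but needs the convexity computation via differentiation of the $\cos^{-1}$ expression. Everything else is routine algebra on the patch identities.
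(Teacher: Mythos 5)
Your proof follows essentially the same route as the paper: reduce to the patch-wise formula from \thmref{thm:JL-general}, prove the scalar bounds $\max\{\rho,0\}\le\R(\rho)\le\frac{1+\rho}{2}$, and convert back using the identities $\sum_{i,j}\inner{[x]^r_{ij}}{[y]^r_{ij}}=r^2\inner{x}{y}$ and $\sum_{i,j}\lt{[x]^r_{ij}}^2=r^2\lt{x}^2$ plus Cauchy--Schwarz. The upper bound (chord of the convex function $\R$ between $(-1,0)$ and $(1,1)$, then Cauchy--Schwarz) is exactly the paper's argument, and your explicit computation $\R''(\rho)=\frac{1}{\pi\sqrt{1-\rho^2}}$ is a welcome verification of the convexity the paper only asserts.

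The one step that does not hold as written is your justification of $\R(\rho)\ge\rho$: ``$\R-\mathrm{id}$ is convex and vanishes at $\rho=1$, so it stays non-negative on $[-1,1]$'' is not a valid inference --- e.g.\ $g(\rho)=\rho-1$ is convex and vanishes at $1$ but is negative on $[-1,1)$. The conclusion is true and the fix is one line: since $\R'(\rho)=1-\cos^{-1}(\rho)/\pi$, the convex function $g=\R-\mathrm{id}$ satisfies $g'(1)=0$, hence $g'\le 0$ on $[-1,1]$, hence $g$ is nonincreasing and $g(\rho)\ge g(1)=0$. (Alternatively, substitute $\rho=\cos\theta$ and check $\sin\theta-\theta\cos\theta\ge 0$ on $[0,\pi]$ by differentiating.) The paper itself simply asserts this scalar inequality without proof, so with this repair your argument matches the paper's in full.
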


\vspace{-.2\baselineskip}
\begin{remark}\label{rem:tightness}
All the bounds in \thmref{thm:con_map} are tight.
For simplicity, we present examples where the unit vectors
$x, y$ are flat, i.e., they are not tensors and $r=1$. Similar examples can be drawn for tensor inputs and $r>1$. 
The upper bound is realized for $x,y\in\mathbb{R}^2$ of the form
$x=\left(\sqrt{\frac{1+\rho}{2}},\sqrt{\frac{1-\rho}{2}}\right)$,
$y=\left(\sqrt{\frac{1+\rho}{2}},-\sqrt{\frac{1-\rho}{2}}\right)$.
Similarly, $x,y\in\mathbb{R}^3$ with
$x=(\sqrt{1-\rho},\sqrt{\rho},0)$, $y=(0,-\sqrt{\rho},\sqrt{1-\rho})$
satisfy $\inner{x}{y}=-\rho$ and
$\E{\sigma(F*x)\sigma(F*y)}=0$ for $r=1$ and $0\le\rho\le 1$.
Finally, we can take $x=(\sqrt{1-\rho},\sqrt{\rho},0)$
and $y=(0,\sqrt{\rho},\sqrt{1-\rho})$ to obtain
$\inner{x}{y}=\E{\sigma(F*x)\sigma(F*y)}=\rho$.

These examples are illustrated in \figref{fig:tightness}.
\end{remark}

\begin{remark}
    Although the lower bound is tight, in most typical scenarios it will be strict, so, in expectation, the convolutional layers will \emph{contract} the dataset after application of enough layers.   To 
    see why, observe \eqref{eq:proof:general:CoM:max}; equality is 
    achieved iff $\rho_{ij}=1 \vee [x]^r_{ij} = 0 \vee [y]^r_{ij}=0 $. For real data sets this will never hold over all $(i,j)$. This can be observed in \trifigref{fig:fmnist}{fig:cifar-10}{fig:imagenet} where the linear fit has a slope of 0.97.
\end{remark}


\subsection{Gaussian Inputs}
\label{ss:main:GaussianIn}

The following result 
shows that the behavior reported by \citep{daniely} [and illustrated in \figref{fig:gaussian} by the orange curve] for FNNs holds for CNNs with Gaussian correlated inputs; this is illustrated in \figref{fig:gaussian}.
For simplicity, we state the result for $d=1$ ($n \times n$ inputs instead of $n \times n \times d$); the extension to $d > 1$ is straightforward.

\begin{theorem}
\label{thm:gaussian}
    Let $X,Y\in \RR^{n \times n}$ be zero-mean jointly Gaussian with the following auto- and cross-correlation functions:
    \begin{align}
    \\[-1.3\baselineskip]
    \label{eq:thm:gaussian-inputs:correlations}
        \E{X_{ij} X_{k\ell}} &= \E{Y_{ij} Y_{k\ell}} = \frac{1}{n^2} \delta_{ik} \delta_{j\ell} ,
      & \E{X_{ij} Y_{k\ell}} &= \frac{\rho}{n^2} \delta_{ik} \delta_{j\ell} ,
    \end{align}
    i.e., $X$ and $Y$ have pairwise i.i.d.\ entries which are correlated with a Pearson correlation coefficient $\rho$.
    Assume further that the filter $F \in \RR^{r \times r}$ comprises zero-mean i.i.d.\ Gaussian entries with variance $2/r^2$ and is independent of $(X,Y)$. 
    Then, for a ReLU activation function $\sigma$, 
    \begin{align} 
        \E{ \inner{\sigma(F*X)}{\sigma(F * Y)} }= \frac{\sqrt{1-\rho^2} +(\pi - \cos^{-1}(\rho))\rho }{\pi} .
    \end{align}
\end{theorem}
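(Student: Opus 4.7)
The plan is to compute $\mathbb{E}[\sigma((F*X)_{uv})\,\sigma((F*Y)_{uv})]$ for each output position $(u,v)$ and then sum over the $n^2$ positions. The key idea is to reverse the natural order of expectation: rather than condition on $(X,Y)$ and apply \thmref{thm:JL-general}, which would leave a hard-looking expectation of $\|[X]^r_{uv}\|\cdot\|[Y]^r_{uv}\|\cdot\R(\rho_{uv})$ over random patches, we first condition on $F$ and exploit the Gaussianity of the input.

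The first step is to observe that, conditioned on $F$, each coordinate $(F*X)_{uv}=\sum_{i,j} F_{ij} X_{u-i,v-j}$ is a linear combination of Gaussians, so the pair $\bigl((F*X)_{uv},(F*Y)_{uv}\bigr)$ is jointly Gaussian with mean zero. Using the correlation structure \eqref{eq:thm:gaussian-inputs:correlations}, the conditional covariances are
\begin{align}
    \mathbb{E}\!\left[(F*X)_{uv}^2 \mid F\right]
    = \mathbb{E}\!\left[(F*Y)_{uv}^2 \mid F\right]
    = \tfrac{\|F\|^2}{n^2},
    \qquad
    \mathbb{E}\!\left[(F*X)_{uv}(F*Y)_{uv} \mid F\right]
    = \tfrac{\rho\,\|F\|^2}{n^2},
\end{align}
where the cross term collapses to a single diagonal sum because different entries of $X$ are uncorrelated with different entries of $Y$.

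The second step is to apply the definition of the dual activation of the ReLU (with $\nu^2_{\ReLU}=1/2$): since conditional on $F$ the pair is $\mathcal{N}\!\bigl(0,\tfrac{\|F\|^2}{n^2}\bigl[\begin{smallmatrix}1&\rho\\ \rho&1\end{smallmatrix}\bigr]\bigr)$, homogeneity gives
\begin{align}
    \mathbb{E}\!\left[\ReLU((F*X)_{uv})\,\ReLU((F*Y)_{uv}) \mid F\right]
    = \tfrac{\|F\|^2}{2n^2}\,\R(\rho).
\end{align}
Crucially, the $F$-dependence factors out as the scalar $\|F\|^2$, and the correlation argument of $\R$ is exactly the input $\rho$ rather than a patch-dependent random quantity.

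The final step is to take the outer expectation over $F$, using $\mathbb{E}[\|F\|^2]=r^2\cdot \tfrac{2}{r^2}=2$, and sum over the $n^2$ positions $(u,v)$. This yields
\begin{align}
    \mathbb{E}[\inner{\sigma(F*X)}{\sigma(F*Y)}]
    = n^2 \cdot \tfrac{2}{2n^2}\,\R(\rho)
    = \R(\rho),
\end{align}
which is the claimed expression. The only step that requires genuine care is the conditional covariance calculation; the rest is bookkeeping. The obstacle that motivates swapping the order of expectation is that the ``natural'' route via \thmref{thm:JL-general} would require evaluating $\mathbb{E}_{X,Y}\bigl[\|[X]^r_{uv}\|\,\|[Y]^r_{uv}\|\,\R(\rho_{uv}(X,Y))\bigr]$, which is a nontrivial nonlinear functional of Gaussian vectors; conditioning on $F$ first bypasses it entirely.
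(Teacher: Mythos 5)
Your proposal is correct and follows essentially the same route as the paper's proof: condition on $F$ via the law of total expectation, observe that $\bigl((F*X)_{uv},(F*Y)_{uv}\bigr)$ is conditionally jointly Gaussian with covariance $\tfrac{\|F\|^2}{n^2}\bigl(\begin{smallmatrix}1&\rho\\ \rho&1\end{smallmatrix}\bigr)$, apply the ReLU dual activation with homogeneity, and finish with $\E{\|F\|^2}=2$. The conditional covariance computation and the final bookkeeping all check out.
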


\subsection{Simple Black and White Image Model}
\label{ss:main:convex-body}

This section provides insight why we get roughly an isometry when embedding natural images via a random convolution followed by ReLU. 

As a conceptual model, we will work with binary pictures $\{0,1\}^{ n \times n}$ or equivalently as subsets of $\mathbb{Z}_n \times \mathbb{Z}_n$.  Also, when considering real-life pictures, an apparent characteristic is that they consist of relatively large monochromatic patches, so for the next theorem, one should keep in mind pictures that consist of large convex bodies.  

To state our results, we define a notion
of a shared $r$-boundary.

\begin{definition}[Boundary]\label{def:boundary} 
Let $A,B \subset \mathbb{Z}_n \times \mathbb{Z}_n $ and let 
$r \in [n]$. 
Then, the $r$-boundary of the intersection between $A$ and $B$, denoted by $\partial_r (A , B)$, is defined as the set of all pixels $(i,j) \in 
\mathbb{Z}_n\times\mathbb{Z}_n$ such that 
\begin{subequations}
\begin{align}
    \exists a_1,b_1,c_1,d_1\in\{-r,\ldots,r\}&:
        (i+a_1,j+b_1)\in A\land
        (i+c_1,j+d_1)\in B\label{eq:05}\\
    \exists a_2,b_2,c_2,d_2\in\{-r,\ldots,r\}&:
        (i+a_2,j+b_2)\notin A\lor
        (i+c_2,j+d_2)\notin B\label{eq:06}
\end{align}
\end{subequations}
where the addition in~\eqref{eq:05} and~\eqref{eq:06} is over $\mathbb{Z}_n$.
\end{definition}

In words, a pixel $(i,j)$ belongs to the $r$-boundary
$\partial_r(A,B)$ if, considering the 
square with edge size $2r+1$ centered at the pixel: (1) the square intersects both $A$ and $B$, (2)
the square is not contained in $A$ or is not contained in $B$.

\begin{example}\label{ex}
    Let $A$ and $B$ be axis-aligned rectangles of sizes $a \times b$ and $c \times d$, respectively, and an intersection of size $e \times f$ where $e,f\geq 2r$.  Then, $ |\partial_r (A , B)|  = 4r(e+f)$. This is illustrated in \figref{fig:boundary}.
\end{example}
\vspace{-.3\baselineskip}



The next theorem bounds the inner product between $A$ and $B$ after applying a ReLU convolution layer in terms of the $r$-boundary
of the intersection of $A$ and $B$.

\begin{theorem}
\label{thm:boundary}
    Let $A,B \subset [n] \times [n] $  and a convolution filter  $F\in \RR^{ (2r+1)  \times  (2r+1) } $ such that the entries in $F$ are i.i.d. Gaussians  with variance $2/(2r+1)^2$. Then, 
    \begin{subequations}
    \begin{align}
        \inner{A}{B} - |\partial_r (A , B)|  
        &\leq \E{ \inner{\ReLU(F*A)}{\ReLU(F * B)} } \leq \inner{A}{B} + |\partial_r (A , B)|\;,
    \label{eq:04}
    \\ \|A-B\|^2-2|\partial_r(A,B)|
        &\le \E{ \|\ReLU(F*A)-\ReLU(F*B)\|^2 }
        \le \|A-B\|^2+2|\partial_r(A,B)|\;.
    \quad\ 
    \label{eq:09}
    \end{align}
    \end{subequations}
\end{theorem}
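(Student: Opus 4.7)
The plan is to apply \Thmref{thm:JL-general} specialized to the ReLU, and then exploit the binary structure of $A,B\subset[n]\times[n]$ to partition the indices $(i,j)$ into three transparent cases. With filter size $(2r+1)$, variance $\nu^2=2/(2r+1)^2$, and $\nu_{\ReLU}^2=1/2$, equations~\eqref{eq:07} and~\eqref{eq:08} specialize to the working identities
\begin{align*}
\E{\inner{\ReLU(F*A)}{\ReLU(F*B)}}
&=\frac{1}{(2r+1)^2}\sum_{i,j\in\mathbb{Z}_n}\|[A]^{2r+1}_{ij}\|\,\|[B]^{2r+1}_{ij}\|\,\Rone(\rho_{ij}),\\
\E{\|\ReLU(F*A)\|^2}&=\|A\|^2,\qquad \E{\|\ReLU(F*B)\|^2}=\|B\|^2,
\end{align*}
where $\rho_{ij}$ is the cosine similarity between the patches $[A]^{2r+1}_{ij}$ and $[B]^{2r+1}_{ij}$. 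The sum ranges over all ``starting-at-$(i,j)$'' patches and, by a cyclic re-indexing, is identical to the sum over the ``centered-at-$(i,j)$'' squares used in \defref{def:boundary}; this shift is immaterial to the argument.

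Next, I classify each index $(i,j)$: case (a), the patch is entirely contained in both $A$ and $B$, so both sub-tensors equal the all-ones tensor, $\rho_{ij}=1$, $\Rone(1)=1$, and both $\|[A]^{2r+1}_{ij}\|\|[B]^{2r+1}_{ij}\|\Rone(\rho_{ij})$ and $\inner{[A]^{2r+1}_{ij}}{[B]^{2r+1}_{ij}}$ equal $(2r+1)^2$; case (b), the patch fails to intersect $A$ or fails to intersect $B$, so one sub-tensor vanishes and both quantities are $0$; case (c), $(i,j)\in\partial_r(A,B)$, for which I use only the crude bounds $0\le\|[A]^{2r+1}_{ij}\|\|[B]^{2r+1}_{ij}\|\Rone(\rho_{ij})\le(2r+1)^2$ (from $\Rone\le 1$ and the fact that each binary $(2r+1)\times(2r+1)$ sub-tensor has $\ell^2$-norm at most $2r+1$) and $0\le\inner{[A]^{2r+1}_{ij}}{[B]^{2r+1}_{ij}}\le(2r+1)^2$. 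By the complementary form of \defref{def:boundary}, these three cases are mutually exclusive and exhaustive.

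A double-counting step gives $\sum_{i,j}\inner{[A]^{2r+1}_{ij}}{[B]^{2r+1}_{ij}}=(2r+1)^2\inner{A}{B}$, since each pixel lies in exactly $(2r+1)^2$ patches. Combining this with (a)--(c), on indices outside $\partial_r$ the summand $\|[A]^{2r+1}_{ij}\|\|[B]^{2r+1}_{ij}\|\Rone(\rho_{ij})$ agrees \emph{exactly} with $\inner{[A]^{2r+1}_{ij}}{[B]^{2r+1}_{ij}}$, while on the $|\partial_r(A,B)|$ boundary indices both quantities lie in $[0,(2r+1)^2]$. Adding over all indices yields
\begin{align*}
\Bigl|\sum_{i,j}\|[A]^{2r+1}_{ij}\|\|[B]^{2r+1}_{ij}\|\Rone(\rho_{ij})-(2r+1)^2\inner{A}{B}\Bigr|\le(2r+1)^2|\partial_r(A,B)|,
\end{align*}
and dividing by $(2r+1)^2$ produces~\eqref{eq:04}.

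For~\eqref{eq:09}, I expand $\|\ReLU(F*A)-\ReLU(F*B)\|^2=\|\ReLU(F*A)\|^2+\|\ReLU(F*B)\|^2-2\inner{\ReLU(F*A)}{\ReLU(F*B)}$, take expectations, and use the norm identities above alongside $\|A-B\|^2=\|A\|^2+\|B\|^2-2\inner{A}{B}$ to obtain
\begin{align*}
\E{\|\ReLU(F*A)-\ReLU(F*B)\|^2}-\|A-B\|^2=-2\bigl(\E{\inner{\ReLU(F*A)}{\ReLU(F*B)}}-\inner{A}{B}\bigr),
\end{align*}
at which point~\eqref{eq:04} directly implies~\eqref{eq:09}. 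I do not foresee a substantial obstacle; the only subtle point is that the per-boundary-pixel slack in case (c) is at most $(2r+1)^2$, which cancels exactly against the normalization $1/(2r+1)^2$ to produce the clean factor $|\partial_r(A,B)|$ on the right-hand side, so the crude bounds used on $\partial_r$ are already sharp at the scale of the theorem.
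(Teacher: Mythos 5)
Your proof is correct and follows the paper's strategy in spirit: both invoke \Thmref{thm:JL-general} to reduce the expectation to a sum of per-patch dual-activation terms, and both rest on the observation that this local term is computed exactly off the $r$-boundary and is trivially bounded on it. The bookkeeping differs in one genuine way. The paper compares the normalized dual-activation term $\dualactthree_{ij}\in[0,1]$ at each center pixel directly against the single product $A_{ij}B_{ij}\in\{0,1\}$, defines the exception sets $E_0$ (where $A_{ij}B_{ij}=0$ but $\dualactthree_{ij}\neq 0$) and $E_1$ (where $A_{ij}B_{ij}=1$ but $\dualactthree_{ij}\neq 1$), and shows $E_0,E_1\subseteq\partial_r(A,B)$; no double counting is needed because $\sum_{i,j}A_{ij}B_{ij}=\inner{A}{B}$ immediately. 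You instead compare per-patch quantities, matching $\lt{[A]^{2r+1}_{ij}}\lt{[B]^{2r+1}_{ij}}\Rone(\rho_{ij})$ against $\inner{[A]^{2r+1}_{ij}}{[B]^{2r+1}_{ij}}$, which forces you to add the double-counting identity $\sum_{i,j}\inner{[A]^{2r+1}_{ij}}{[B]^{2r+1}_{ij}}=(2r+1)^2\inner{A}{B}$ and to track the factor $(2r+1)^2$ through the case analysis. Your three cases (a)/(b)/(c) are exactly the negations of conditions (2) and (1) of \defref{def:boundary} and the boundary itself, so the exclusivity/exhaustiveness claim is sound, and the crude bound of $(2r+1)^2$ per boundary index cancels against the normalization as you note. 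The treatment of \eqref{eq:09} is identical to the paper's. What your route buys is a slightly more symmetric, contrapositive-free presentation; what the paper's buys is avoiding the double-counting lemma and the $(2r+1)^2$ scaling altogether. Either is acceptable.
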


\Thmref{thm:boundary} and \corref{cor:similarity} imply 

\begin{align} 
\\[-2.2\baselineskip]
    \left| \varrho_\OUT - \rho_\IN \right| \leq \frac { \partial_r (A , B)} { \lt{A}  \lt{B} }+\epsilon
\\[-1.2\baselineskip]
\end{align} 

with high probability for a large enough number of filters $N$.
This shows that a set of images consisting of  ``large patches'', meaning that $\frac { \partial_r (A , B)} { \lt{A} \lt{B} }$ is small (as in \exampleref{ex} for small $r$), is embedded almost-isometrically by a random ReLU convolutional layer. 
Moreover, \textit{any} set of images may be embedded almost-isometrically this way. To see how, fix $r$ while increasing artificially (digitally) the resolution of the images. The latter would yield $\frac { \partial_r (A , B)} { \lt{A}  \lt{B} } \rightarrow 0$ as the resolution tends to infinity.


\vspace{-1.\baselineskip}

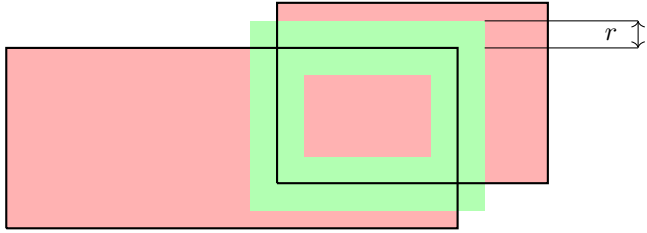
\begin{figure}\vspace*{-1cm}\centering
\begin{tikzpicture}[scale=1.2]
\fill [red!30] (-2, 1) -- (3, 1) -- (3, 3) -- (-2, 3) -- (-2, 1);
\fill [red!30] (1, 1.5) -- (4, 1.5) -- (4, 3.5) -- (1, 3.5) -- (1, 1.5);
\fill [green!30] (0.7, 3.3) -- (3.3, 3.3) -- (3.3, 1.2) -- (0.7, 1.2)
    -- (0.7, 3.3);
\fill [red!30] (1.3, 2.7) -- (2.7, 2.7) -- (2.7, 1.8) -- (1.3, 1.8)
    -- (1.3, 2.7);

\draw [thick] (-2, 1) -- (3, 1) -- (3, 3) -- (-2, 3) -- (-2, 1);
\draw [thick] (1, 1.5) -- (4, 1.5) -- (4, 3.5) -- (1, 3.5) -- (1, 1.5);
\draw [thin]  (3.3, 3.3) -- (5, 3.3);
\draw [thin]  (3.3, 3) -- (5, 3);
\draw [thin, arrows={<->}]  (5, 3) -- (5, 3.3);

\node at (4.7, 3.15) {$r$};
\end{tikzpicture}
\caption{Illustration of the shared $r$-boundary from 
\defref{def:boundary} in case of two axis-aligned rectangles.
The boundary is marked in green.}
\label{fig:boundary}\vspace{-1.3\baselineskip}
\end{figure}



\section{Discussion}\vspace*{-0.1cm}
\label{s:summary}

\vspace{-.2\baselineskip}
\subsection{Grayscale versus RGB Images}
\vspace{-.2\baselineskip}

The curious reader might wonder how the analysis over black-and-white (B/W) images can be extended to grayscale and RGB images. 
While we concentrated on B/W images in this work,
the definition of a shared boundary 
readily extends to grayscale images as does  the result of \thmref{thm:boundary}. 
This, in turn, explains the behavior of the grayscale-image dataset F-MNIST in \figref{fig:fmnist}.

For RGB images, the framework developed in this work does not hold in general.
To see why, consider two extremely simple RGB images: $Red$---consisting of all-zero green and blue channels and a constant unit-norm red channel, and $Green$---consisting of all-zero red and blue channels and a constant unit-norm green channel.
Clearly, $\inner{Red}{Green}=0$. However, a simple computation, 
in the spirit of this work, shows that $\E{ \inner{\ReLU(F*Red)}{\ReLU(F*Green) } } = 1/\pi = \R(0)$ which is far from $0$. That is, for such monochromatic pairs, the network behaves as a fully connected network \eqref{eq:rho_in-rho_out} and not according to the almost isometric behavior of \thmref{thm:boundary} and \figref{fig:imagenet}.

So why does an almost isometric relation appear in \twofigref{fig:cifar-10}{fig:imagenet}? The answer is somewhat surprising and of an independent interest. It appears that for the RGB datasets CIFAR-10 and ImageNet, the RGB channels have  high cosine similarity (which might not be surprising on its own), and additionally all the three channels have roughly the same norm. Quantitatively---averaged over $10^4$ images that were picked uniformly at random from the ImageNet dataset---the angle between two channels is \url{~}$12.5^{\circ}$ and the relative difference between the channel norms is \url{~}$9.2\%$. 

This phenomenon implies that our analysis for black and white images (as well as its extension to grayscale images) holds for RGB natural-images datasets. That said, there are images that are roughly monochromatic (namely, predominantly single-color images) but their measure is small and therefore these images can be treated as outliers. 


\vspace{-.2\baselineskip}
\subsection{Future Work} 
\vspace{-.2\baselineskip}
 
\Figref{fig:transform} demonstrates that a randomly initialized convolutional layer induces an almost-isometric yet contracting embedding of natural images (mind the linear fit with slope \url{~}0.97 in the figure), and therefore the contraction intensifies (``worsens'') with each additional layer.

In practice, the datasets are preprocessed by operations such as 
mean removal and normalization.
These operations induce a sharper contraction. Since a sharp collapse in the deeper layers implies failure to achieve dynamical isometry, a natural goal for a practitioner would be to prevent such a collapse. 
\citet{Saxe14exactsolutions} propose to replace  the i.i.d.\ Gaussian initialization with orthogonal initialization that is chosen uniformly over the orthogonal group for linear FNNs.
However, the latter would fail to prevent the collapse phenomenon as well (see \citep{pennington2017resurrecting}). This can be shown by a similar calculation to that of the dual activation of ReLU with i.i.d.\ Gaussian initialization. In fact, the dual activations of both initializations will coincide in the limit of large input dimensions. 

That said, the collapse can be prevented by moving further away from i.i.d.\ initialization. For example, by drawing the weights of every filter from the Gaussian distribution under the condition that the filter's sum of weights is zero.  Curiously, also including batch normalization  between the layers prevents such a collapse. In both cases, the opposite phenomenon to a collapse happens, an expansion. In  the deeper layers, the dataset  pre-activation vectors  become orthogonal to one another. Understanding this expansion and why it holds for seemingly unrelated modifications is a compelling line of research.







\subsubsection*{Acknowledgments}

The work of I.~Nachum and M.~Gastpar in this paper was supported in part by the Swiss National Science Foundation under Grant 200364.

The work of A.~Khina was supported by the \textsc{Israel Science Foundation} (grants No.\ 2077/20) and
the WIN Consortium through the Israel Ministry of Economy and Industry.





\bibliographystyle{unsrtnat}
\bibliography{mlbib}

\newpage
\appendix







\section{Proof of \thmref{thm:JL-general}}

To prove \thmref{thm:JL-general}, we first prove the following simple result.
\begin{lemma} 
\label{lem:Gaussian-vecs}
    Let $u,v\in \RR^k$ be two (deterministic) vectors, and let $G,H\in \RR^k$ be zero-mean jointly Gaussian vectors with the following auto- and cross-correlation matrices:
    \begin{align}
        \E{G G^T} &= \E{H H^T} = \nu^2 I_k,
      & \E{G H^T} &= \rho \nu^2 I_k.
    \end{align}
    Then, 
    \begin{align} 
        Z 
        = \begin{pmatrix} 
            \inner{u}{G} 
         \\ \inner{v}{H}
          \end{pmatrix}
        = \begin{pmatrix} 
            u^T G 
         \\ v^T H
          \end{pmatrix} 
    \end{align}
    is a zero-mean Gaussian vector with a covariance matrix 
    \begin{align}
        \E{Z Z^T} &= 
        \nu^2 
        \begin{pmatrix} 
            \norm{u}^2 & \rho \inner{u}{v} 
         \\ \rho \inner{u}{v} & \norm{v}^2 
        \end{pmatrix}
        .
    \end{align} 
\end{lemma}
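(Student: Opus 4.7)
The plan is to recognize that $Z$ is a deterministic linear image of the jointly Gaussian vector $W := \begin{pmatrix} G \\ H \end{pmatrix} \in \RR^{2k}$, so that $Z$ is automatically Gaussian, and the content of the lemma reduces to computing a single $2 \times 2$ covariance matrix.

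Concretely, I would write $Z = M W$, where $M = \begin{pmatrix} u^T & 0 \\ 0 & v^T \end{pmatrix} \in \RR^{2 \times 2k}$ is a block matrix built from $u$ and $v$. Since affine transformations of jointly Gaussian vectors are Gaussian, $Z$ is Gaussian with mean $M \cdot \E{W} = 0$ and covariance $M \Sigma M^T$, where the assumptions on the auto- and cross-correlations of $G$ and $H$ give the block form $\Sigma = \E{W W^T} = \nu^2 \begin{pmatrix} I_k & \rho I_k \\ \rho I_k & I_k \end{pmatrix}$.

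The remaining step is a direct entrywise evaluation of $\E{ZZ^T}$: using the bilinearity of expectation and the assumed correlation matrices, $\E{(u^T G)^2} = u^T \E{G G^T} u = \nu^2 \norm{u}^2$, and similarly $\E{(v^T H)^2} = \nu^2 \norm{v}^2$; for the off-diagonal entry, $\E{(u^T G)(v^T H)} = u^T \E{G H^T} v = \rho \nu^2 \inner{u}{v}$. Collecting these three numbers into a symmetric $2 \times 2$ matrix reproduces exactly the covariance claimed in the lemma.

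There is no genuine obstacle here; the statement is a direct consequence of two standard facts, namely that (i) any linear image of a Gaussian vector is Gaussian, and (ii) the covariance of $MW$ equals $M \Sigma M^T$. The only bookkeeping point to be careful about is keeping track of the block structure of $\Sigma$ so that the $G$-$G$, $H$-$H$, and cross $G$-$H$ contributions are separated correctly when computing each of the three entries of the output covariance.
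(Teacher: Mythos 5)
Your proposal is correct and matches the paper's proof: both observe that $Z$ is Gaussian as a linear image of the jointly Gaussian pair $(G,H)$ and then compute the three covariance entries entrywise via $u^T\E{GG^T}u$, $v^T\E{HH^T}v$, and $u^T\E{GH^T}v$. The explicit block matrix $M$ is just a slightly more formal packaging of the same argument.
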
 

\begin{proof}[Proof of \lemref{lem:Gaussian-vecs}]
    $Z$ is a Gaussian vector as a linear combination of two jointly Gaussian vectors. Furthermore, 
    \begin{align}
        \E{Z} &= \E{ 
          \begin{pmatrix} 
            u^T G 
         \\ v^T H
          \end{pmatrix} 
        }
        = \begin{pmatrix} 
            u^T \E{G} 
         \\ v^T \E{H} 
          \end{pmatrix} 
        = 0, 
     \\ \E{Z Z^T} &= \E{ 
        \begin{pmatrix} 
            u^T G G^T u & u^T G H^T v
         \\ v^T H G^T u & v^T H H^T v
        \end{pmatrix} 
        } 
        = 
        \begin{pmatrix} 
            u^T \E{G G^T} u & u^T \E{G H^T} v
         \\ v^T \E{H G^T} u & v^T \E{H H^T} v
        \end{pmatrix} 
     \\ &= 
        \nu^2 
        \begin{pmatrix} 
            \norm{u}^2 & \rho \inner{u}{v} 
         \\ \rho \inner{u}{v} & \norm{v}^2 
        \end{pmatrix}\qedhere
    \end{align}
\end{proof}

We are now ready to prove \thmref{thm:JL-general}.
\begin{proof}[Proof of \thmref{thm:JL-general}]
    The equation~\eqref{eq:10} is proved as follows.
    \begin{subequations}
    \label{eq:proof:dual-activation}
    \begin{align} 
        \E{ \inner{\sigma (F * x)}{\sigma( F * y )} } 
        &=  \sum_{i,j \in \sZ_n} \E{ \sigma (\inner{F}{[x]^r_{ij}}) \cdot \sigma \left( \inner{F}{[y]^r_{ij}} \right) }
    \label{eq:proof:dual-activation:convolution}
     \\ &= \sum_{i,j \in \sZ_n} \dualactthree \left(  [x]^r_{ij}, [y]^r_{ij},\nu \right) ,
    \label{eq:proof:dual-activation:def}
    \end{align} 
    \end{subequations}
    where \eqref{eq:proof:dual-activation:convolution} follows from the definition of the cyclic convolution \eqref{eq:def:convolution} and the linearity of expectation, 
    and \eqref{eq:proof:dual-activation:def} follows from \defref{def:dual-activation} using \lemref{lem:Gaussian-vecs} with $F$ taking the role of $G$ and $H$ (with $\rho=1)$, 
    and $[x]^r_{ij}$ and $[y]^r_{ij}$ taking the roles of $u$ and $v$, respectively.
    
    Now, we set out to prove~\eqref{eq:07} under the assumption that $\sigma$ is homogeneous and the following notation.
    \begin{align}
    \begin{pmatrix} X_{ij} \\ Y_{ij}  \end{pmatrix} 
    &\sim \gN\left( \begin{pmatrix} 0 \\ 0 \end{pmatrix}, 
         \begin{pmatrix}
            \lv [x]^r_{ij} \rv ^2 &  \inner{[x]^r_{ij}}{[y]^r_{ij}} 
         \\[3pt] 
            \inner{[x]^r_{ij}}{[y]^r_{ij}} &  \lv [y]^r_{ij} \rv ^2 
         \end{pmatrix}
         \right)\\
        \begin{pmatrix} \oX_{ij} \\ \oY_{ij}  \end{pmatrix} 
        &\sim \gN\left( \begin{pmatrix} 0 \\ 0 \end{pmatrix}, 
         \begin{pmatrix}
            1 &  \rho_{ij}
         \\[3pt] 
           \rho_{ij}  & 1 
         \end{pmatrix}
         \right)
     \end{align}
 
   \begin{subequations}
    \label{eq:proof:dual-activation2}
    \begin{align} 
        \E{ \inner{\sigma (F * x)}{\sigma( F * y )} }    &= \sum_{i,j \in \sZ_n} \dualactthree \left(  [x]^r_{ij}, [y]^r_{ij},\nu \right)    
    \\  \label{eq:jl1} 
       &=  \sum_{i,j \in \sZ_n}\E{\sigma (\nu X_{ij})  \cdot \sigma (\nu Y_{ij})} \\  \label{eq:jl2} 
        &= \nu^2 \sum_{i,j \in \sZ_n} \lt{[x]^r_{ij}}\lt{[y]^r_{ij} } \E{ \sigma (X_{ij}/\lt{[x]^r_{ij}  }) \cdot \sigma ( Y_{ij}/\lt{[y]^r_{ij}  }) } \\   \label{eq:jl3} 
        &=  \nu^2 \sum_{i,j \in \sZ_n} \lt{[x]^r_{ij}}\lt{[y]^r_{ij} } \E{ \sigma (\oX_{ij}  ) \cdot \sigma (\oY_{ij} ) } \\ \label{eq:jl4} 
        &=  \nu ^2\cdot\nu^2_{\sigma} \sum_{i,j \in \sZ_n} \lt{ [x]^r_{ij}}  \lt{[y]^r_{ij}} \dualactone \left(  \rho_{ij}  \right),
         \end{align} 
    \end{subequations}
   where \eqref{eq:jl1} follows from \defref{def:dual-activation}, \eqref{eq:jl2} follows from the homogeneity of $\sigma$, \eqref{eq:jl3} follows from standard random variable calculus, and \eqref{eq:jl4} follows from~\eqref{eq:def:dual-activation}.
   
   Finally, to obtain~\eqref{eq:08}, we apply~\eqref{eq:07}
   for $x=y$ and use that $\rho_{ij}=1$ and
   $\dualactone(1)=1$:
   \begin{align}
       \E{\sigma(F*x)^2}
       &=\nu^2\cdot\nu_\sigma^2
       \sum_{i,j\in\mathbb{Z}_n}\|[x]_{ij}^r\|^2
       \dualactone(\rho_{ij})\\
       &=\nu^2\cdot\nu_\sigma^2
       \sum_{i,j\in\mathbb{Z}_n}\|[x]_{ij}^r\|^2
       \dualactone(1)\\
       &=\nu^2\cdot\nu_\sigma^2
       \sum_{i,j\in\mathbb{Z}_n}\|[x]_{ij}^r\|^2
       =\nu^2\cdot\nu_\sigma^2\cdot r^2
       \|x\|^2\;.\qedhere
   \end{align}
\end{proof}


\section{Proof of \thmref{thm:JL-general:CoM}}

To prove the concentration of measure results of this work, we will make use of the following definition.
\begin{definition}
\label{def:Orlicz-norm}
    The Orlicz norm of a random variable (RV) $X$ with respect to (w.r.t.) a convex function $\psi:[0,\infty) \rightarrow [0,\infty)$ such that $\psi(0)=0$ and $\lim_{x \rightarrow \infty} \psi(x)=\infty$ is defined by 
    \begin{align}
        \norm{X}_{\psi} := \inf \left\{ t > 0 \ \middle|\ \E{ \psi \left( \frac{|X|}{t} \right)} \leq 1  \right\}.
    \end{align}
    In particular, $X$ is said to be \textit{sub-Gaussian} if $\pt{X} < \infty$, and \textit{sub-exponential} if $\po{X} < \infty$, where 
    $\psi_p(x) := \exp\left\{ x^p \right\} - 1$ for $p \geq 1$.
\end{definition}


The following two results, whose proofs are available in \cite[Ch.~2 and~5.2]{HDP}, will be useful for the proof of \thmref{thm:JL-general:CoM}.

\begin{lemma}
\label{lem:prod}
    Let $X$ and $Y$ be sub-Gaussian random variables (not necessarily independent). 
    Then, 
    \begin{enumerate}
    \item \textbf{Sum of independent sub-Gaussians.}
        If $X$ and $Y$ are also independent, then their sum, $X+Y$, is sub-Gaussian. 
        Moreover,
        $\pt{X + Y}^2 \leq C \left( \pt{X}^2 + \pt{Y}^2 \right)$ 
        for an absolute constant $C$. 
        The same holds (with the same constant $C$) also for sums of multiple independent sub-Gaussian RVs.
    \item 
        \textbf{Centering.}
        $X - \E{X}$ is sub-Gaussian. 
        Moreover, 
        $\pt{X - \E{X}} \leq C \pt{X}$ for an absolute constant $C$.
    \item
        \textbf{Lipschitz functions of Gaussians.}
        If $X$ is a centered Gaussian and $f$ is a function with Lipschitz
        constant $L$,
        then $\pt{f(X)-\E{f(X)}}\le C\pt{X}$ for an absolute constant $C$.
    \item 
        \textbf{Product of sub-Gaussians.}
        $XY$ is sub-exponential. 
        Moreover,
        $\lv XY \rv_{\psi_1} \leq  \lv X \rv_{\psi_2} \lv Y \rv _{\psi_2}$.
    \end{enumerate}
\end{lemma}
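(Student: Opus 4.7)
My plan is to handle the four claims by working through the equivalent formulations of sub-Gaussianity that will be most convenient for each part, and then combining them with standard inequalities. The key equivalence I will rely on is that $\pt{X}$ is (up to absolute constants) the smallest $t$ satisfying $\E{\exp(X^2/t^2)} \le 2$, and equivalently the smallest $K$ satisfying the MGF bound $\E{\exp(\lambda X)} \le \exp(c\lambda^2 K^2)$ for all $\lambda \in \RR$ when $\E{X}=0$. An analogous equivalence holds for $\po{X}$ with a linear MGF bound valid for $|\lambda|\le 1/K$. I will take these equivalences as background (they are the ``Orlicz-norm toolkit'' from \cite{HDP}) and base each of the four items on them.

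For item~1, I will use the MGF characterization after centering: independence of $X$ and $Y$ makes the MGF of $X+Y-\E{X+Y}$ the product of the individual centered MGFs, so the quadratic bound in the exponent is additive, giving $\pt{X+Y}^2 \lesssim \pt{X}^2 + \pt{Y}^2$. Iterating this pairwise argument handles sums of finitely many independent sub-Gaussians with the same absolute constant. For item~2, I will combine two elementary facts: (i) the constant random variable $c$ has $\pt{c} = |c|/\sqrt{\log 2}$ directly from \defref{def:Orlicz-norm}; and (ii) $|\E{X}| \le \E{|X|} \le C\pt{X}$ by Markov's inequality applied to $\psi_2$. Then the triangle inequality for the Orlicz norm (which follows from convexity of $\psi_2$) yields $\pt{X-\E{X}} \le \pt{X} + \pt{\E{X}} \le C\pt{X}$.

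For item~4, the target is $\po{XY} \le \pt{X}\pt{Y}$. Young's inequality $|ab| \le (a^2+b^2)/2$ applied to $a = X/\pt{X}$ and $b = Y/\pt{Y}$ gives
\begin{equation}
\exp\!\left(\frac{|XY|}{\pt{X}\pt{Y}}\right) \le \exp\!\left(\frac{X^2}{2\pt{X}^2} + \frac{Y^2}{2\pt{Y}^2}\right).
\end{equation}
Taking expectations and applying Cauchy--Schwarz decouples the two factors, and each is bounded by $2$ by definition of $\pt{\cdot}$. Rearranging yields $\E{\psi_1(|XY|/(\pt{X}\pt{Y}))} \le 1$, i.e., exactly the claimed product bound.

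The main obstacle is item~3, which is not purely an Orlicz-norm manipulation but requires concentration for Gaussian measure. My plan is to invoke the Borell--TIS (equivalently, Gaussian Lipschitz concentration) inequality: if $Z$ is a standard Gaussian vector (or scalar) and $f$ is $L$-Lipschitz, then $\sP(|f(Z)-\E{f(Z)}|>t) \le 2\exp(-t^2/(2L^2))$. Converting this tail bound into an Orlicz-norm statement via the standard equivalence between sub-Gaussian tails and $\pt{\cdot}$ gives $\pt{f(Z)-\E{f(Z)}} \le CL$. To match the form stated in the lemma, I will rescale: for a centered Gaussian $X$ we have $\pt{X} \asymp \sigma_X$, so $\pt{f(X)-\E{f(X)}} \le CL\sigma_X \le C'L\pt{X}$, and, for the scalar case stated in the lemma, the Lipschitz constant $L$ is absorbed into the constant as written (or the lemma should be read with an extra factor of $L$; I will state it carefully to match \cite[Ch.~2]{HDP}). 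This is the one step that genuinely uses Gaussianity of $X$ rather than sub-Gaussianity alone, and tracking the constants through the tail-to-Orlicz conversion is the most delicate bookkeeping in the proof.
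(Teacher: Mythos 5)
The paper offers no proof of this lemma at all: it is presented as a bundle of standard facts with a pointer to \cite[Ch.~2 and~5.2]{HDP}, so your proposal is being compared against that reference rather than against an in-paper argument. Your sketches for items 2--4 are correct and are essentially the proofs in that reference: centering via the triangle inequality together with $\pt{c}=|c|/\sqrt{\log 2}$ and $|\E{X}|\le C\pt{X}$; Gaussian Lipschitz concentration (Borell--TIS) converted to a $\psi_2$ bound for item 3; and Young's inequality $|ab|\le(a^2+b^2)/2$ followed by Cauchy--Schwarz for item 4, which gives exactly $\E{\exp\left(|XY|/(\pt{X}\pt{Y})\right)}\le 2$, i.e., $\po{XY}\le\pt{X}\pt{Y}$. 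You also correctly flag that item 3 as printed is missing a factor of $L$: the paper itself invokes it downstream in the form $\pt{X_{ij\ell}-\E{X_{ij\ell}}}\le C_0 L\nu R$, so the intended statement is $\pt{f(X)-\E{f(X)}}\le CL\pt{X}$, which your rescaling of a centered Gaussian to a standard one delivers.

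The genuine gap is in item 1. Iterating the two-variable inequality does \emph{not} yield the same absolute constant for $N$-fold sums: each pairwise application multiplies the constant, so after $N-1$ steps you obtain only $\pt{\sum_{i}X_i}^2\le C^{N-1}\sum_i\pt{X_i}^2$, and the constant is no longer absolute --- which is precisely what the lemma's final sentence (``with the same constant $C$'') asserts, and precisely what matters in concentration arguments where $N\to\infty$. The fix is to abandon iteration and run your MGF argument on all $N$ variables simultaneously: after centering, independence factorizes the MGF of the sum into a product of $N$ factors, each bounded by $\exp\left(c\lambda^2\pt{X_i}^2\right)$, so the centered sum satisfies the sub-Gaussian MGF bound with parameter $\sum_i\pt{X_i}^2$, giving $\pt{\sum_i X_i-\E{\sum_i X_i}}^2\le C\sum_i\pt{X_i}^2$ with $C$ absolute; the means are then restored by the item-2 bookkeeping you already have. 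This is how \cite[Ch.~2]{HDP} proceeds, and with this repair your proof is complete.
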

 
\begin{theorem}[Bernstein's inequality for sub-exponentials]
\label{thm:Bernstein}
    Let $X_1, \ldots , X_N$ be independent zero-mean sub-exponential RVs. Then, 
    \begin{align} 
        \sP \left( \left| \frac{1}{N} \sum_{i=1}^N X_i  \right|    \geq t \right) &\leq 2\exp \left\{ - \min \left\{ \frac{t^2}{K^2}, \frac{t}{K} \right\} \cdot c \cdot N \right\},
        & \forall t \geq 0,
    \end{align} 
    where $K=\max_i \po{X_i}$ and $c > 0$ is an absolute constant. 
\end{theorem}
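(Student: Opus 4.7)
The plan is to prove this by the classical Chernoff–Cramér method: bound the probability of a large deviation through the moment generating function (MGF), and then optimize the free exponential parameter $\lambda$. The two-regime $\min\{t^2/K^2, t/K\}$ structure will arise because, unlike a sub-Gaussian, a sub-exponential RV has an MGF bound that holds only on a bounded neighborhood of the origin, and the optimal $\lambda$ may hit the boundary of that neighborhood.

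First, I would establish a quantitative MGF estimate for a single zero-mean sub-exponential RV $X$ with $\po{X}\le K$. From \defref{def:Orlicz-norm}, $\E{\exp(|X|/K)}\le 2$, which by expanding the exponential yields moment bounds of the form $\E{|X|^k}\le c_0\,k!\,K^k$ for all $k\ge 1$ and an absolute constant $c_0$. Taylor-expanding $e^{\lambda X}$, using $\E{X}=0$ to kill the linear term, and summing the resulting geometric series then gives
\begin{align}
\E{\exp(\lambda X)} \;\le\; \exp\bigl(C_1 K^2 \lambda^2\bigr) \qquad \text{for all } |\lambda|\le \frac{C_2}{K},
\end{align}
for absolute constants $C_1,C_2>0$.

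Second, I would use independence to tensorize: $\E{\exp(\lambda\sum_i X_i)} = \prod_i \E{\exp(\lambda X_i)} \le \exp(C_1 N K^2 \lambda^2)$ in the same range $|\lambda|\le C_2/K$. Markov's inequality applied to $\exp(\lambda \cdot)$ yields, for any $t\ge 0$ and any admissible $\lambda>0$,
\begin{align}
\sP\!\left(\frac{1}{N}\sum_{i=1}^N X_i \ge t\right)
\;\le\; \exp\!\bigl(-\lambda N t + C_1 N K^2 \lambda^2\bigr).
\end{align}
I would then optimize: the unconstrained minimizer is $\lambda^\star = t/(2 C_1 K^2)$, with value $-Nt^2/(4 C_1 K^2)$. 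This choice lies in the admissible range iff $t\le 2 C_1 C_2 K$, giving the sub-Gaussian regime $\exp(-c N t^2/K^2)$. When $t>2C_1C_2K$, I would instead set $\lambda=C_2/K$ (the boundary), which yields $\exp(-c'Nt/K)$ after absorbing the quadratic term into the linear one. Taking the worse of the two exponents gives a bound of the form $\exp(-c N\min\{t^2/K^2, t/K\})$, matching the claim.

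Third, to handle the absolute value I would apply the same argument to $-X_i$ (which has the same $\po{\cdot}$ norm and is likewise centered), giving the matching lower-tail bound, and union-bound the two tails to pick up the factor of $2$ in front.

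The main obstacle is the MGF step and the case analysis of the optimization: one must verify the admissibility condition $|\lambda|\le C_2/K$ at each step and carefully combine the two regimes into a single $\min$ expression without losing track of absolute constants. Everything else reduces to independence, Markov's inequality, and elementary calculus.
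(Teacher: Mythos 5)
Your proposal is correct and is essentially the proof the paper relies on: the paper does not prove \Thmref{thm:Bernstein} itself but cites the standard reference, where the argument is exactly this Chernoff--Cram\'er scheme (Orlicz-norm moment bounds $\E{|X|^k}\le k!\,K^k$, an MGF bound $\E{e^{\lambda X}}\le e^{C_1K^2\lambda^2}$ valid only for $|\lambda|\le C_2/K$, tensorization by independence, constrained optimization over $\lambda$ producing the two regimes, and a union bound over the two tails). No gaps; the constrained-optimization case split you describe is precisely where the $\min\{t^2/K^2,\,t/K\}$ form comes from.
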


We are now ready to prove the desired concentration-of-measure result.
\begin{proof}[Proof of \thmref{thm:JL-general:CoM}]
    $\inner{F_\ell}{[x]^r_{ij}}$ and $\inner{F_\ell}{[y]^r_{ij}}$ are (jointly) Gaussian RVs---as linear combinations of i.i.d.\ Gaussian RVs---with mean zero and variances $\nu^2 \norm{[x]^r_{ij}}^2$ and $\nu^2 \norm{[y]^r_{ij}}^2$, respectively,
    for all $\ell \in [N]$ and $i,j \in \sZ_n$.
    Hence, $\inner{F_\ell}{[x]^r_{ij}}$ and $\inner{F_\ell}{[y]^r_{ij}}$ are also sub-Gaussian with
    \begin{align}
        \pt{\inner{F_\ell}{[x]^r_{ij}}} &\le C_0\nu R,
      & \pt{\inner{F_\ell}{[y]^r_{ij}}} &\le C_0\nu R,
      & \forall \ell &\in [N], 
      & \forall i,j &\in \sZ_n 
    \end{align} 
    
    
    for some universal constant $C_0>0$. Define now
    \begin{align}
        X_{ij\ell} &:= \sigma \left( \inner{F_\ell}{[x]^r_{ij}} \right),
      & Y_{ij\ell} &:= \sigma \left( \inner{F_\ell}{[y]^r_{ij}} \right),
      & \forall \ell &\in [N], 
      & \forall i,j &\in \sZ_n .
    \end{align}
    Since $\sigma$ is Lipschitz continuous with a Lipschitz constant $L$, 
    and since the inner products $\langle F_\ell, [x]^r_{ij}\rangle$ and $\langle F_\ell, [y]^r_{ij}\rangle$
    are centered Gaussians, by property~3 in \lemref{lem:prod}
    it follows that
    $X_{ij\ell}$ and $Y_{ij\ell}$ are sub-Gaussian with 
    \begin{align} 
    \label{eq:def:Xijl}
        \pt{X_{ij\ell}-\E{X_{ij\ell}}} &\leq C_0 L \nu R,
      & \pt{Y_{ij\ell}-\E{Y_{ij\ell}}} &\leq C_0 L \nu R,
      & \forall \ell &\in [N], 
      & \forall i,j &\in \sZ_n .\ \ 
    \end{align}
    Due to the assumption $\sigma(0)=0$ and the fact that $\sigma$
    has Lipschitz constant $L$, inequality 
    $|\sigma(x)|\le L|x|$ holds for every $x\in\mathbb{R}$.
    Accordingly, we can bound the expectation of $\E{ X_{ij\ell}}$
    with
    \begin{align}\label{eq:13}
        \left|\E{X_{ij\ell}}\right|
        &\le\E{\left| X_{ij\ell} \right|}
        \le L\E{ \left|\inner{F_\ell}{[x]_{ij}^r}\right|}
        \le L\sqrt{\Var\big[\inner{F_\ell}{[x]_{ij}^r}\big]}
        \le L\nu R\;,
    \end{align}
    where we applied the Cauchy--Schwarz inequality. Similarly, it holds
    that
    \begin{align}\label{eq:14}
        \big|\E{Y_{ij\ell}}\big|\le L\nu R\;.
    \end{align}
    Since by homogeneity of norm it also holds for any constant 
    $D\in\mathbb{R}$ that
    $\pt{D}\le C_0|D|$, by triangle inequality it then follows
    from~\eqref{eq:def:Xijl}, \eqref{eq:13} and~\eqref{eq:14} that
    \begin{align}
        \pt{X_{ij\ell}}
        &\le\pt{X_{ij\ell}-\E{X_{ij\ell}}}
        +\pt{\E{X_{ij\ell}}} \le 2C_0L\nu R\;,\\
        \pt{Y_{ij\ell}}
        &\le 2C_0L\nu R\;.
    \end{align}
    Therefore, subsequent application of properties 4 and 2 of \lemref{lem:prod} to $X_{ij\ell} Y_{ij\ell}$, yields
    \begin{align}
    \label{eq:Xijl:Orlicz-norm}
    \begin{aligned} 
        \po { X_{ij\ell} Y_{ij\ell} } &\leq 4C_0^2 \nu^2 L^2 R^2, 
     \\ \po { X_{ij\ell} Y_{ij\ell} - \E{X_{ij\ell} Y_{ij\ell}} } 
     &
     \leq 4C_0^2 C \nu^2 L^2 R^2 ,
    \end{aligned}
    \end{align} 
    for all $\ell \in [N]$ and all $i,j \in \sZ_n$,
    where $C>0$ is is the absolute constant from \lemref{lem:prod}. Now it follows
    \begin{subequations} 
    \label{eq:proof:general:CoM}
    \noeqref{eq:proof:general:CoM:max}
    \begin{align}
        \sP &\left( \left| \frac{1}{N} \sum_{\ell=1}^N \inner{\sigma (F_\ell * x)}{\sigma( F_\ell * y )} - \sum_{i,j=1}^n \dualactthree\left(  [x]^r_{ij},[y]^r_{ij},\nu \right)\right| \geq  \epsilon \right) 
     \\ &\qquad = \sP \left( \frac{1}{N} \left| \sum_{\ell\in [N]; i,j \in \sZ_n} \Big\{ X_{ij\ell} Y_{ij\ell} - \E{ X_{ij\ell} Y_{ij\ell} } \Big\} \right| \geq \eps \right)
    \label{eq:proof:general:CoM:dual-activation}
     \\ &\qquad \leq \sP \left(\frac{1}{N} \sum_{i,j \in \sZ_n} \left| \sum_{\ell\in [N]} \Big\{ X_{ij\ell} Y_{ij\ell}
        - \E{ X_{ij\ell}Y_{ij\ell} } \Big\} \right| \geq \eps \right)
    \label{eq:proof:general:CoM:triang-ineq}
     \\ &\qquad \leq \sP \left(\frac{n^2}{N} \max_{i,j \in \sZ_n} \left| \sum_{\ell\in [N]} \Big\{ X_{ij\ell} Y_{ij\ell} - \E{ X_{ij\ell}Y_{ij\ell} } \Big\} \right| \geq \eps \right)
    \label{eq:proof:general:CoM:max}
     \\ &\qquad \leq \sum_{i,j \in \sZ_n} \sP \left(\frac{n^2}{N} \left| \sum_{\ell\in [N]} \Big\{ X_{ij\ell}Y_{ij\ell} - \E{ X_{ij\ell}Y_{ij\ell} } \Big\} \right| \geq \eps \right)
    \label{eq:proof:general:CoM:UB}
     \\ &\qquad \leq 2 n^2 \exp \left\{ -\min \left( \frac{16\eps^2}{C_0^4C^2 \nu^4 L^4 R^4 n^4}, \frac{4\eps}{C_0^2 C \nu^2 L^2 R^2 n^2} \right) c N \right\} ,
    \label{eq:proof:general:CoM:Bernstein}
    \end{align} 
    \end{subequations}
    where 
    \eqref{eq:proof:general:CoM:dual-activation} follows from \thmref{thm:JL-general} and \eqref{eq:def:Xijl},
    \eqref{eq:proof:general:CoM:triang-ineq} follows from the triangle inequality,
    \eqref{eq:proof:general:CoM:UB} follows from the union bound, 
    and \eqref{eq:proof:general:CoM:Bernstein} follows from \eqref{eq:Xijl:Orlicz-norm} and \thmref{thm:Bernstein}.
    

    Finally, noting that \eqref{eq:thm:CoM} holds for 
    $N >  \max \left(   \frac{C_0^4  C^2\nu^4L^4R^4 n^4}{16 c \epsilon^2},  
    \frac{C_0^2 C\nu^2L^2R^2 n^2}{4 c \epsilon} \right) \log \frac{2 n^2}{\delta}$ concludes the proof.
\end{proof}


\section{Proof of \corref{cor:similarity}}

Let us start by inspecting~\eqref{eq:11}
to observe that, by homogeneity of $\sigma$, 
neither the
value of~$\brho_\OUT$ nor the distribution of~$\varrho_\OUT$
depend on $\nu$ or the norms of $x$ and $y$.
Therefore, let us assume without loss of generality that
$x$ and $y$ are unit vectors, as well as that
$\nu=\frac{1}{\nu_\sigma r}$.
Furthermore, note that homogeneity of $\sigma$ implies $\sigma(0)=0$,
so \thmref{thm:JL-general:CoM} can be applied with activation function $\sigma$.

Let $A:=\frac{1}{N}\sum_{\ell=1}^N
\inner{\sigma(F_\ell*x)}{\sigma(F_\ell*y)}$,
$B:=\frac{1}{N}\sum_{\ell=1}^N\|\sigma(F_\ell*x)\|^2$,
$C:=\frac{1}{N}\sum_{\ell=1}^N\|\sigma(F_\ell*y)\|^2$
and $A':=\frac{1}{r^2}\sum_{i,j\in\mathbb{Z}_n}
\|[x]_{ij}^r\|\cdot\|[y]_{ij}^r\|\dualactone(\rho_{ij})$.
Comparing against~\eqref{eq:11}, we see that $\varrho_\OUT=\frac{A}{\sqrt{BC}}$ and
that $\brho_\OUT=A'$.

We now apply \thmref{thm:JL-general:CoM} three times, for pairs of
vectors $(x,y)$, $(x,x)$ and $(y,y)$, respectively, with parameters
of $\epsilon/5$ and $\delta/3$. Using equations~\eqref{eq:10}--\eqref{eq:08}
from \thmref{thm:JL-general}, we
check that indeed each of the three following
relations holds
for $N>\max(K,K^2)\log\frac{6n^2}{\delta}$, 
except with probability $\delta/3$: 
\begin{align}\label{eq:12}
    |A-A'|\le\frac{\epsilon}{5}\;,
    ~~~~~~~~
    |B-1|\le\frac{\epsilon}{5}\;,
    ~~~~~~~~
    |C-1|\le\frac{\epsilon}{5}\;.
\end{align}
By union bound, all three inequalities in~\eqref{eq:12}
hold simultaneously except with probability $\delta$.
Finally, by elementary inequalities we establish that
if~\eqref{eq:12} holds for some $A,A',B,C\in\mathbb{R}$ such
that $|A|\le 1$ and $0<\epsilon\le1/10$, then it also holds
that 
\begin{align}
    \big|\varrho_\OUT-\brho_\OUT\big|&=
    \left|\frac{A}{\sqrt{BC}}-A'\right|\le\epsilon\;.
\end{align}
Since this occurs except with probability $\delta$, the
proof is concluded.\hfill\qedsymbol


\section{Proof of lemmata \ref{lem:JL} and \ref{lem:JL:CoM}}

\begin{proof}[Proof of \lemref{lem:JL}]
    \Lemref{lem:JL} may be viewed as a special case of \thmref{thm:JL-general} with an identity activation function $\sigma$. Therefore, 
    \begin{align}
        \E{\inner{F * x}{F * y}} 
        \stackrel{(a)}= \sum_{i,j \in \sZ_n} \dualactthree \left(  [x]^r_{ij},[y]^r_{ij},\nu \right)
        \stackrel{(b)}= \sum_{i,j \in \sZ_n} \nu^2 \inner{[x]^r_{ij}}{[y]^r_{ij}}
        \stackrel{(c)}= \inner{x}{y},
    \end{align}
    where $(a)$ follows from \thmref{thm:JL-general}, 
    $(b)$ follows from \defref{def:dual-activation},
    and $(c)$ holds since $\nu^2 = 1/r^2$.
\end{proof}

\begin{proof}[Proof of \lemref{lem:JL:CoM}]
    Again, \lemref{lem:JL:CoM} follows from \thmref{thm:JL-general:CoM} for
    $\sigma(x)\equiv x$ (and hence $L=1$) and from \lemref{lem:JL}.
\end{proof}


\section{Proof of \thmref{thm:con_map}}

    We start with the lower bound.
    Define $\rho_{ij} := \frac{\inner{[x]^r_{ij}}{[y]^r_{ij}}}{\norm{[x]_{ij}^r} \norm{[y]_{ij}^r}}$
    for all $i,j \in \sZ_n$. Then, for all $i,j \in \sZ_n$, 
    \begin{subequations} 
    \label{eq:proof:bounds:LBij}
    \begin{align}
        \inner{[x]^r_{ij}}{[y]^r_{ij}} &=   \lt{[x]^r_{ij}} \lt{[y]^r_{ij}}  \rho_{ij}
    \label{eq:proof:bounds:LBij:rho_ij}
     \\ & \leq \lt{[x]^r_{ij}} \lt{[y]^r_{ij}}  \frac{\sqrt{1-\rho_{ij}^2} +(\pi -\cos^{-1}(\rho_{ij}))\rho_{ij} }{\pi}
    \label{eq:proof:bounds:LBij:inequality}
     \\ &=r^2 \E{ \sigma \left( \inner{F}{[x]^r_{ij}} \right) \sigma \left( \inner{F}{[y]^r_{ij}} \right)},
    \label{eq:proof:bounds:LBij:non-linearity}
    \end{align}
    \end{subequations}
    where \eqref{eq:proof:bounds:LBij:rho_ij} holds by the definition of $\rho_{ij}$,
    \eqref{eq:proof:bounds:LBij:inequality} holds since 
    \begin{align} 
        a&\leq\frac{\sqrt{1-a^2} +\left( \pi -\cos^{-1}(a) \right)a }{\pi}  & \forall |a| \leq 1 , 
    \end{align} 
    and \eqref{eq:proof:bounds:LBij:non-linearity} follows from
    \citep[Table~1 and Section~C of supplement]{daniely}. 
%
%
%
    Thus, 
    \begin{align}\label{eq:01}
        \inner{x}{y} 
        = 1/r^2 \sum_{i,j \in \sZ_n} \inner{[x]_{ij}^r}{[y]_{ij}^r}
        \leq \sum_{i,j \in \sZ_n} \E{ \sigma \left( \inner{F}{[x]^r_{ij}} \right) \sigma \left( \inner{F}{[y]^r_{ij}} \right)}
        = \E{ \sigma (F * x )\sigma( F * y ) } ,
    \end{align}
    where the inequality follows from \eqref{eq:proof:bounds:LBij}, and the last step is due to \eqref{eq:proof:dual-activation}.
    Now, since the ReLU activation is non-negative, $0 \leq \E{ \sigma ( F * x )\sigma( F * y )}$, which completes the proof  of the left inequality in \eqref{eq:thm:con_map:bounds}.
%

For the upper bound, we use the following convexity argument.
First, by homogeneity of ReLU, we can assume without loss
of generality that $x$ and $y$ have unit norm. 
Therefore, it remains to
show $\E{\sigma(F*x)\sigma(F*y)}\le\frac{1+\rho}{2}$ for
$x,y$ such that $\|x\|=\|y\|=1$ and $\inner{x}{y}=\rho$.

Recall that $\rho_{ij}=\frac{\inner{[x]_{ij}^r}{[y]_{ij}^r}}
{\|[x]_{ij}^r\|\|[y]_{ij}^r\|}$.
We expand the definitions in the same way as in~\eqref{eq:01}
and~\eqref{eq:proof:bounds:LBij:non-linearity}
\begin{align}
\E{\sigma(F*x)\sigma(F*y)}
&=\sum_{i,j\in\mathbb{Z}_n}
\E{\sigma\left(\inner{F}{[x]_{ij}^r}\right)
\sigma\left(\inner{F}{[y]_{ij}^r}\right)}\\
&=\sum_{i,j\in\mathbb{Z}_n}
\frac{\|[x]_{ij}^r\|\|[y]_{ij}^r\|}{r^2}\Rone(\rho_{ij})\;.
\label{eq:02}
\end{align}
It is easily checked that $\Rone(1)=1$, $\Rone(-1)=0$ and
that $\Rone(x)$ is convex for $-1\le x\le 1$.
Accordingly, using the decomposition
$\rho=\frac{1+\rho}{2}\cdot 1+\frac{1-\rho}{2}\cdot(-1)$,
we have by Jensen's inequality for every
$i,j\in\mathbb{Z}_n$
\begin{align}
    \Rone(\rho_{ij})&\le\frac{1+\rho_{ij}}{2}\;.
\end{align}
Substituting into~\eqref{eq:02},
\begin{align}
    \E{\sigma(F*x)\sigma(F*y)}
    &\le\sum_{i,j\in\mathbb{Z}_n}
    \frac{\|[x]_{ij}^r\|\|[y]_{ij}^r\|}{r^2}\cdot
    \frac{1+\rho_{ij}}{2}\\
    &=\frac{1}{2r^2}\sum_{i,j\in\mathbb{Z}_n}
    \|[x]_{ij}^r\|\|[y]_{ij}^r\|
    +\frac{1}{2r^2}\sum_{i,j\in\mathbb{Z}_n}
    \inner{[x]_{ij}^r}{[y]_{ij}^r}
    \\
    &\le\frac{1}{2r^2}\sqrt{
    \left(\sum_{i,j\in\mathbb{Z}_n}\|[x]_{ij}^r\|^2\right)
    \left(\sum_{i,j\in\mathbb{Z}_n}\|[y]_{ij}^r\|^2\right)
    }
    +\frac{1}{2}\cdot\inner{x}{y}
    \label{eq:03}
    \\
    &=\frac{1+\rho}{2}\;,
\end{align}
where in~\eqref{eq:03} we applied the Cauchy--Schwarz inequality
and used the initial assumption $\|x\|=\|y\|=1$.
\hfill \qedsymbol


\begin{figure}[t]\centering
\begin{tikzpicture}
\fill[red!30] (0, 0) -- (1, 0) -- (1, 1.5) -- (0, 1.5)
-- (0, 0);
\fill[red!60] (0, 1.5) -- (1, 1.5) -- (1, 3) -- (0, 3)
-- (0, 1.5);
\fill[blue!30] (1.5, 0) -- (2.5, 0) -- (2.5, 1.5) -- (1.5, 1.5) -- (1.5, 0);
\fill[red!60] (1.5, 1.5) -- (2.5, 1.5) -- (2.5, 3) -- (1.5, 3)
-- (1.5, 1.5);

\node at (0.5, 2.25) {$\frac{1+\rho}{2}$};
\node at (0.5, 0.75) {$\frac{1-\rho}{2}$};
\node at (2, 2.25) {$\frac{1+\rho}{2}$};
\node at (2, 0.75) {$-\frac{1-\rho}{2}$};

\draw[thick] (0, 0) -- (1, 0) -- (1, 1.5) -- (0, 1.5)
-- (0, 0);
\draw[thick] (0, 0) -- (1, 0) -- (1, 3) -- (0, 3)
-- (0, 0);
\node at (0.5, 3.5) {$x$};
\draw[thick] (1.5, 0) -- (2.5, 0) -- (2.5, 1.5) -- (1.5, 1.5) -- (1.5, 0);
\draw[thick] (1.5, 0) -- (2.5, 0) -- (2.5, 3) -- (1.5, 3)
-- (1.5, 0);
\node at (2, 3.5) {$y$};

\node at (1.25, -0.5) {$\brho_{\OUT}=\frac{1+\rho}{2}$};
\fill[red!60] (3.85, 2) -- (5.15, 2) -- (5.15, 3) -- (3.85, 3) -- (3.85, 2);
\fill[red!30] (3.85, 1) -- (5.15, 1) -- (5.15, 2) -- (3.85, 2) -- (3.85, 1);
\fill[blue!30] (5.65, 1) -- (6.95, 1) -- (6.95, 2) -- (5.65, 2) -- (5.65, 1);
\fill[red!60] (5.65, 0) -- (6.95, 0) -- (6.95, 1) -- (5.65, 1) -- (5.65, 0);

\node at (4.5, 2.5) {$\sqrt{1-\rho}$};
\node at (4.5, 1.5) {$\sqrt{\rho}$};
\node at (4.5, 0.5) {$0$};
\node at (6.3, 2.5) {$0$};
\node at (6.3, 1.5) {$-\sqrt{\rho}$};
\node at (6.3, 0.5) {$\sqrt{1-\rho}$};

\draw[thick] (3.85, 2) -- (5.15, 2) -- (5.15, 3) -- (3.85, 3) -- (3.85, 2);
\draw[thick] (3.85, 1) -- (5.15, 1) -- (5.15, 2) -- (3.85, 2) -- (3.85, 1);
\draw[thick] (3.85, 0) -- (5.15, 0) -- (5.15, 1) -- (3.85, 1) -- (3.85, 0);
\node at (4.5, 3.5) {$x$};
\draw[thick] (5.65, 2) -- (6.95, 2) -- (6.95, 3) -- (5.65, 3) -- (5.65, 2);
\draw[thick] (5.65, 1) -- (6.95, 1) -- (6.95, 2) -- (5.65, 2) -- (5.65, 1);
\draw[thick] (5.65, 0) -- (6.95, 0) -- (6.95, 1) -- (5.65, 1) -- (5.65, 0);
\node at (6.3, 3.5) {$y$};
\node at (5.4, -0.5) {$\brho_{\OUT}=0$};
\fill[red!60] (8.35, 2) -- (9.65, 2) -- (9.65, 3) -- (8.35, 3) -- (8.35, 2);
\fill[red!30] (8.35, 1) -- (9.65, 1) -- (9.65, 2) -- (8.35, 2) -- (8.35, 1);
\fill[red!30] (10.15, 1) -- (11.45, 1) -- (11.45, 2) -- (10.15, 2) -- (10.15, 1);
\fill[red!60] (10.15, 0) -- (11.45, 0) -- (11.45, 1) -- (10.15, 1) -- (10.15, 0);

\node at (9, 2.5) {$\sqrt{1-\rho}$};
\node at (9, 1.5) {$\sqrt{\rho}$};
\node at (9, 0.5) {$0$};
\node at (10.8, 2.5) {$0$};
\node at (10.8, 1.5) {$\sqrt{\rho}$};
\node at (10.8, 0.5) {$\sqrt{1-\rho}$};

\draw[thick] (8.35, 2) -- (9.65, 2) -- (9.65, 3) -- (8.35, 3) -- (8.35, 2);
\draw[thick] (8.35, 1) -- (9.65, 1) -- (9.65, 2) -- (8.35, 2) -- (8.35, 1);
\draw[thick] (8.35, 0) -- (9.65, 0) -- (9.65, 1) -- (8.35, 1) -- (8.35, 0);
\node at (9, 3.5) {$x$};
\draw[thick] (10.15, 2) -- (11.45, 2) -- (11.45, 3) -- (10.15, 3) -- (10.15, 2);
\draw[thick] (10.15, 1) -- (11.45, 1) -- (11.45, 2) -- (10.15, 2) -- (10.15, 1);
\draw[thick] (10.15, 0) -- (11.45, 0) -- (11.45, 1) -- (10.15, 1) -- (10.15, 0);
\node at (10.8, 3.5) {$y$};
\node at (9.9, -0.5) {$\brho_{\OUT}=\rho$};
\end{tikzpicture}
\caption{ An illustration of the examples from \remref{rem:tightness}.
Note that similar examples can be constructed in higher dimensions, e.g.,
by equally distributing the coefficients over a larger set of coordinates.
}
\label{fig:tightness}
\end{figure}
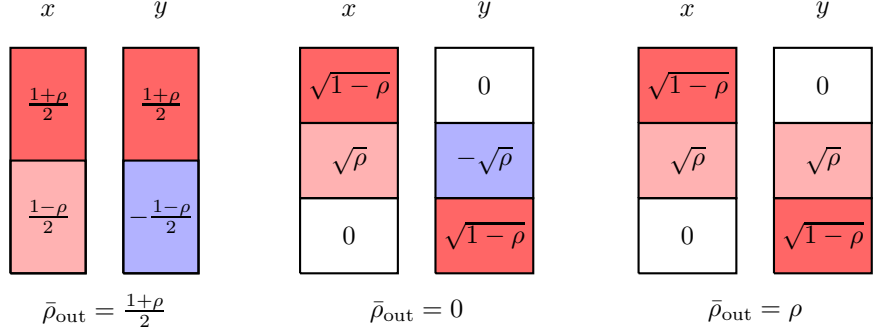

\section{Proof of \thmref{thm:gaussian}}

    The proof follows from the following steps.
    \begin{subequations}
    \label{eq:proof:Gaussian-input}
    \begin{align} 
        \E{ \inner{\sigma(F*X)}{\sigma(F * Y)} }
        &= \E{\CE{\sum_{i,j \in \sZ_n} \sigma(\inner{F}{[X]_{ij}^r})\sigma(\inner{F}{[Y]_{ij}^r})}{F}}
    \label{eq:proof:Gaussian-input:smoothing}
     \\ &= \E{ n^2 \E{\sigma \left( X_F \right) \sigma \left( Y_F \right)} }
    \label{eq:proof:Gaussian-input:symmetry}
     \\ &= \frac{1}{2} \frac{\sqrt{1 - \rho^2} + \left( \pi - \cos^{-1}(\rho) \right) \rho}{\pi} \E{ \norm{F}^2 } 
    \label{eq:proof:Gaussian-input:correlated-sigmas}
     \\ &= \frac{\sqrt{1 - \rho^2} + \left( \pi - \cos^{-1}(\rho) \right) \rho}{\pi}
    \label{eq:proof:Gaussian-input:F-var}
    \end{align} 
    \end{subequations}
    where \eqref{eq:proof:Gaussian-input:smoothing} follows from the law of total expectation;
    \eqref{eq:proof:Gaussian-input:symmetry} follows from \eqref{eq:thm:gaussian-inputs:correlations}, from the independence of $F$ in $(X,Y)$, and from \lemref{lem:Gaussian-vecs} with $F$ taking the roles of $u$ and $v$, and $[X]_{ij}^r$ and $[Y]_{ij}^r$ taking the roles of $G$ and $H$, resulting in $X_F$ and $Y_F$ which are zero-mean jointly Gaussian with variances $\norm{F}^2 / n^2$ and a Pearson correlation coefficient $\rho$;
    \eqref{eq:proof:Gaussian-input:correlated-sigmas} follows from the homogeneity of the ReLU activation function and from \citep[eq.~(6)]{deep_ker}, \citep[Theorem~4]{giryes2016deep}, \citep[Section~8]{daniely}; 
    \eqref{eq:proof:Gaussian-input:F-var} holds by the definition of $F$.
    %
\hfill \qedsymbol


\section{Proof of \thmref{thm:boundary}}

In this section, we will use the following additional notations.
$[A]^{\Or}_{ij}$ will denote the submatrix of $A$ formed from rows $i-r$ to $i+r$ and columns $j-r$ to $j+r$, 
and $\mathbf{1}_{k\times k}$  will denote a $k\times k$ matrix with all entries equal $1$.

    The main effort in the proof is establishing~\eqref{eq:04}.
    We apply \thmref{thm:JL-general}, so we have  $ \E{ \inner{\sigma (F * A)}{\sigma( F * B )}} = 
   \sum_{i,j} \dualactthree(  [A]^{\Or}_{ij}, [B]^{\Or}_{ij} ,2/(2r+1) )$. And specifically, for the ReLU activation, it holds that  
     \begin{align}
     \label{eq1}
       \dualactthree(  [A]^{\Or}_{ij}, [B]^{\Or}_{ij} ,2/(2r+1) ) =  \frac { \lt{[A]^{\Or}_{ij}} \lt{[B]^{\Or}_{ij}} \R(\rho_{ij}) } { (2r+1)^2 }
   \end{align}
   where $\rho_{ij}= 
   \frac{ \inner{ [A]^{\Or}_{ij}   }{ [B]^{\Or}_{ij}  }  } {  \lt{ [A]^{\Or}_{ij}}  \lt{ [B]^{\Or}_{ij}} } $.
   
   A natural way to show that the convolutional layer induces  isometry in expectation would be to show 
   \begin{equation}\label{eq2}
       \dualactthree(  [A]^{\Or}_{ij}, [B]^{\Or}_{ij} ,2/(2r+1) )= A_{ij} \cdot B_{ij} 
   \end{equation}
   for all pairs $(i,j)$. Unfortunately, this does not hold for all pairs. 
   However, we will show that the only pairs for which
   it does not hold belong to the $r$-boundary
   $\partial_r(A,B)$.
   There are two cases to consider $A_{ij} \cdot B_{ij}=0$ and $A_{ij} \cdot B_{ij}=1$ since $A$ and $B$ are binary.
   
   Let $\dualactthree_{ij}=
   \dualactthree\left([A]_{ij}^{\Or},
   [B]_{ij}^{\Or}),2/(2r+1)\right)$.
   Let 
   $E_0=\{(i,j)\in\mathbb{Z}_n: A_{ij}\cdot B_{ij}=0
   \land \dualactthree_{ij}\ne 0\}$ 
   and
   $E_1=\{(i,j)\in\mathbb{Z}_n: A_{ij}\cdot B_{ij}=1
   \land \dualactthree_{ij}\ne 1\}$.
   Since clearly $0\le\dualactthree_{ij}\le 1$ and $\inner{A}{B}=|\{(i,j):A_{ij}\cdot B_{ij}=1\}|$, we have
   \begin{align}
        \inner{A}{B}-|E_1|
       &\le \mathbb{E}_F\left[\sigma(F*A),\sigma(F*B)\right]
       \le \inner{A}{B}+|E_0|\;.
   \end{align}
   In particular, if we show $E_0, E_1\subseteq\partial_r(A, B)$,
   then~\eqref{eq:04} will be established. This is what we show
   in the rest of the proof.
   
   For readability, we repeat  here the two conditions for a pair $(i,j)$ to be included in the boundary, see \defref{def:boundary}. Namely,
   $(i,j)$ belongs to $\partial_r(A,B)$ if and only
   if both conditions below hold:
   \begin{enumerate}
    \item$\exists a_1,b_1,c_1,d_1\in\{-r,\ldots,r\}:
        (i+a_1,j+b_1)\in A\land
        (i+c_1,j+d_1)\in B$.
    \item$\exists a_2,b_2,c_2,d_2\in\{-r,\ldots,r\}:
        (i+a_2,j+b_2)\notin A\lor
        (i+c_2,j+d_2)\notin B$.
%
%
    \end{enumerate}
   
   Let $(i,j)$ be such that
   $A_{ij} \cdot B_{ij}=0$. 
   By equation~\eqref{eq1}, $\dualactthree_{ij}=0$ if and only if $[A]^{\Or}_{ij}=0$ or $[B]^{\Or}_{ij}=0$.  So, when $A_{ij} \cdot B_{ij}=0$, for $\dualactthree_{ij}\ne 0$ to hold
   it is necessary that
there are  $ a_1, b_1, c_1, d_1 \in\{-r,\ldots,r\}$ such that $A_{i+a_1,j+b_1}=1$ and $A_{i+c_1,j+d_1}=1$ which corresponds to the first item in  \defref{def:boundary}. The second item holds with $a_2=b_2=c_2=d_2=0$ since $A_{ij} \cdot B_{ij}=0$ so either $A_{ij}=0$ or $B_{ij}=0$. This shows  that pairs of indices 
   for which $A_{ij}\cdot B_{ij}=0$ and
   $\dualactthree_{ij}\ne 0$ are contained  in  $\partial_r(A , B)$,
   in other words that $E_0\subseteq\partial_r(A,B)$.
   
   
   Now take $(i,j)$ such that $A_{ij} \cdot B_{ij}=1$. 
   By equation~\eqref{eq1},  $\dualactthree_{ij}=1$ if and only if $[A]^{\Or}_{ij}=\mathbf{1}_{(2r+1)\times (2r+1)}$ and $[B]^{\Or}_{ij}=\mathbf{1}_{(2r+1)\times (2r+1)}$. So, when $A_{ij} \cdot B_{ij}=1$ and $\dualactthree_{ij}\ne 1$, then
   there are  $ a_2, b_2, c_2, d_2 \in \{-r,\ldots,r\}$ such that $A_{i+a_2,  j+b_2}=0$ or $B_{i+c_2,  j+d_2}=0$ which corresponds to the second item in \defref{def:boundary}. The first item holds with $a_1=b_1=c_1=d_1=0$ since $A_{ij} \cdot B_{ij}=1$ so both $A_{ij}=1$ and $B_{ij}=1$. This shows that pairs of indices 
   for which $A_{ij}\cdot B_{ij}=1$ and $\dualactthree_{ij}\ne 1$ are
   contained in $\partial_r(A,B)$, or equivalently that
   $E_1\subseteq \partial_r(A,B)$.
   
   This concludes the proof of~\eqref{eq:04}. As
   for~\eqref{eq:09}, it follows easily by substituting 
   in the formula $\|A-B\|^2=\|A\|^2+\|B\|^2-2\inner{A}{B}$.
   We then use the fact
   $\E{ \left\| \ReLU(F*A)^2 \right\|}=\|A\|^2$, which follows
   from~\eqref{eq:08}, recalling that $\nu^2=2/(2r+1)^2$ and
   $\nu_{\ReLU}^2=1/2$.
   \hfill \qedsymbol

\end{document}